\newcommand{\ours}{\textcolor{black}{\textsc{Bayesian-LoRA}}}
\newif\ifsubmission
  \newcommand{\notes}[1]{}
  \newcommand{\draft}[1]{}
  \newcommand{\ml}[1]{}
  \newcommand{\dg}[1]{}
  \newcommand{\gb}[1]{}
    \definecolor{notescolor}{rgb}{0.7,0.7,0.7}
    \definecolor{draftcolor}{rgb}{0.3,0.8,0.8}
    \definecolor{mlcolor}{rgb}{0.0, 0.45, 0.9} 
    \definecolor{dgcolor}{rgb}{0.9, 0.6, 0.0} 
    \definecolor{gbcolor}{rgb}{0.8, 0.1, 0.5}
  \newcommand{\notes}[1]{{\footnotesize\textcolor{notescolor}{#1}}}
  \newcommand{\draft}[1]{{\footnotesize\textcolor{draftcolor}{#1}}}
  \newcommand{\ml}[1]{{\scriptsize\textsf{\textcolor{mlcolor}{ML:~#1}}}}
  \newcommand{\dg}[1]{{\scriptsize\textsf{\textcolor{dgcolor}{DG:~#1}}}}
  \newcommand{\gb}[1]{{\scriptsize\textsf{\textcolor{gbcolor}{GB:~#1}}}}
\definecolor{ourshl}{gray}{0.92}
\newcolumntype{P}[1]{>{\raggedright\arraybackslash\hspace{0pt}}p{#1}}
\newcolumntype{C}[1]{>{\centering\arraybackslash}p{#1}}
\newcommand{\best}[1]{{\bfseries\boldmath$#1$}}
\def\1{\bm{1}}
\DeclareMathAlphabet{\mathsfit}{\encodingdefault}{\sfdefault}{m}{sl}
\SetMathAlphabet{\mathsfit}{bold}{\encodingdefault}{\sfdefault}{bx}{n}
\theoremstyle{plain}
\newtheorem{theorem}{Theorem}[section]
\newtheorem{proposition}[theorem]{Proposition}
\newtheorem{corollary}[theorem]{Corollary}
\theoremstyle{definition}
\theoremstyle{remark}
\icmltitlerunning{Bayesian-LoRA: Probabilistic Low-Rank Adaptation of Large Language Models}
\begin{document}

\twocolumn[
  \icmltitle{Bayesian-LoRA: Probabilistic Low-Rank Adaptation of Large Language Models}

  % It is OKAY to include author information, even for blind submissions: the
  % style file will automatically remove it for you unless you've provided
  % the [accepted] option to the icml2026 package.

  % List of affiliations: The first argument should be a (short) identifier you
  % will use later to specify author affiliations Academic affiliations
  % should list Department, University, City, Region, Country Industry
  % affiliations should list Company, City, Region, Country

  % You can specify symbols, otherwise they are numbered in order. Ideally, you
  % should not use this facility. Affiliations will be numbered in order of
  % appearance and this is the preferred way.
  \icmlsetsymbol{equal}{*}

  \begin{icmlauthorlist}
    \icmlauthor{Moule Lin}{tcd}
    \icmlauthor{Shuhao Guan}{ucd}
    \icmlauthor{Andrea Patane}{tcd}
    \icmlauthor{David Gregg}{tcd}
    \icmlauthor{Goetz Botterweck}{tcd}
\end{icmlauthorlist}

\icmlaffiliation{tcd}{School of Computer Science and Statistics, Trinity College Dublin, Dublin, Ireland and Lero the Research Ireland Centre for Software, Ireland}

\icmlaffiliation{ucd}{School of Computer Science, University College Dublin, Dublin, Ireland}

\icmlcorrespondingauthor{Goetz Botterweck}{goetz.botterweck@tcd.ie}

% \icmlcorrespondingauthor{Shuhao Guan}{shuhao.guan@ucdconnect.ie}

  % You may provide any keywords that you find helpful for describing your
  % paper; these are used to populate the "keywords" metadata in the PDF but
  % will not be shown in the document
  \icmlkeywords{Bayesian inference, Low-Rank Adaptation, calibration, large language models, Sparse Gaussian Processes, uncertainty quantification}

  \vskip 0.3in
]

% this must go after the closing bracket ] following \twocolumn[ ...

% This command actually creates the footnote in the first column listing the
% affiliations and the copyright notice. The command takes one argument, which
% is text to display at the start of the footnote. The \icmlEqualContribution
% command is standard text for equal contribution. Remove it (just {}) if you
% do not need this facility.

% Use ONE of the following lines. DO NOT remove the command.
% If you have no special notice, KEEP empty braces:
\printAffiliationsAndNotice{}  % no special notice (required even if empty)
% Or, if applicable, use the standard equal contribution text:
% \printAffiliationsAndNotice{\icmlEqualContribution}

\begin{abstract}
Large language models are typically optimized for accuracy and, therefore, will guess even when uncertain about their predictions. This problem becomes especially pronounced when the model is fine-tuned on small datasets, which often causes overfitting and results in a tendency toward miscalibration.
In this work, we introduce Bayesian-LoRA, which reformulates the deterministic LoRA update as a probabilistic low-rank representation inspired by Sparse Gaussian Processes (SGP). We identify a structural isomorphism between LoRA's factorization and Kronecker-factored SGP posteriors, and show that LoRA emerges as a limiting case when posterior uncertainty collapses. We conduct extensive experiments on various LLM architectures across commonsense reasoning, language modeling, and mathematical reasoning benchmarks. With only approximately 0.42M additional parameters and ${\approx}\,1.2{\times}$ training cost relative to standard LoRA, Bayesian-LoRA significantly improves calibration across models from 7B up to 30B, achieving up to 84\% Expected Calibration Error (ECE) and 76\% Negative Log-Likelihood (NLL) reduction while maintaining competitive accuracy for both in-distribution and out-of-distribution (OoD) evaluations.

\textbf{Code}: {\small https://github.com/moulelin/Bayesian-LoRA} 
\end{abstract}
% \textbf{Code}: {\small https://github.com/anonymouspaper987/Bayesian-LoRA} 
% \textbf{Docs:} {\small https://anonymouspaper987.github.io}

\section{Introduction}
\label{sec:intro}
Fine-tuning large language models (LLMs) with parameter-efficient methods such as Low-Rank Adaptation (LoRA) \citep{hu2022lora} is now standard practice for adapting pre-trained models to downstream tasks \citep{ding2023parameter,xu2021raise,malladi2023fine,hu2024lora,wang2024drive,shorinwa2025survey,li2025uni}. By updating only a small set of low-rank matrices, LoRA achieves strong task accuracy with lower computational cost, memory footprint, and data requirements \citep{yin2024lofit,lin2024data,liu2025uncertainty,ye2024benchmarking}.

Although pre-trained models are reasonably well calibrated out of the box \citep{zhou2024calibration, wang2023calibration}, calibration often deteriorates after domain-specific fine-tuning, and as a result, models tend to become systematically overconfident \citep{liu2024calibration,mai2024fine,zhou2023dr}. Such degradation poses problems when LLMs are applied in safety-critical domains, such as autonomous driving \citep{tu2025driveditfit, wu2021way}, medical diagnosis \citep{savage2025large}, and others \citep{liu2025uncertainty, kim2025medical,chuang2025confident}. This motivates calibration-aware fine-tuning methods that can maintain or improve probability calibration during LLM training.

\paragraph{Limitations of existing approaches.}
Probabilistic methods can improve calibration through Bayesian neural networks \citep{yangbayesian}, stochastic formulations \citep{lin2025stochastic}, and Sparse Gaussian Processes \citep{titsias2009variational,burt2020convergence}. However, full Bayesian inference is computationally infeasible at LLM scale \citep{xue2021bayesian,sankararaman2022bayesformer}, while efficient approximations like Laplace methods \citep{yangbayesian} apply calibration corrections \emph{after} training. BLoB \citep{wang2024blob} trains Bayesian LoRA weights via backpropagation but requires mean-field assumptions over the full LoRA parameter space.

We propose \ours{}, a calibration-aware fine-tuning framework derived from a structural observation: the Kronecker-factored conditional distribution in Sparse Gaussian Process (SGP) inference produces weight updates $M_W(U) = T_r\,U\,T_c$, which exhibit a \emph{structural isomorphism} (in the functional sense of shared bilinear form, not strict algebraic equivalence) with LoRA's factorization $\Delta W = \tfrac{\alpha}{r}BA$. The projectors $T_r$, $T_c$ play analogous roles to LoRA's $B$, $A$ matrices, and the inducing variable $U$ replaces the deterministic core with a stochastic one. This suggests that \emph{LoRA can be naturally embedded within a probabilistic framework}, where deterministic LoRA emerges as a limiting case. By maintaining a non-degenerate variational posterior over $U$, enriched by a normalizing flow \citep{lin2025flow}, we obtain a probabilistic generalization with calibrated uncertainty. This design offers three advantages: (i) uncertainty is modeled in a low-rank inducing space, keeping overhead minimal; (ii) a closed-form KL term avoids expensive Hessian computations; and (iii) calibration is optimized end-to-end during training.

We evaluate \ours{} on six commonsense reasoning benchmarks with Llama-2-7B, generative language modeling on WikiText-2, and mathematical reasoning with Qwen2.5-14B-Instruct and Qwen3-30B-A3B-Instruct-2507. Our work provides the following contributions:
\begin{itemize}[itemsep=2pt, topsep=3pt, leftmargin=1.5em]
\item \textbf{Structural Isomorphism.} We identify a structural isomorphism (shared bilinear functional form, not strict algebraic equivalence) between Kronecker-factored SGP posteriors and LoRA's factorization. Under limiting conditions (point-mass posterior and vanishing conditional noise), \ours{} reduces to a deterministic bilinear update, motivating our probabilistic generalization.
\item \textbf{Flow-Augmented Variational Inference.} We improve the expressiveness of the posterior by applying normalizing flows in the inducing space and derive an Evidence Lower Bound (ELBO) with a closed-form conditional KL term. This enables calibration-aware training with minimal overhead (${\approx}\,1.2{\times}$ training time, ${\approx}\,0.42$M additional parameters).
\item \textbf{Evaluation.} Tested across commonsense reasoning, text generation, and math tasks (including 14B dense and 30B Mixture-of-Experts (MoE) models), \ours{} improves NLL (Negative Log-Likelihood) and achieves strong calibration under distribution shift while maintaining competitive accuracy.
\end{itemize}

\section{Related Work}
\label{sec:related}

% Modern neural networks, including fine-tuned LLMs, are often poorly calibrated and overconfident \citep{guo2017calibration,liu2024calibration,mai2024fine}.
Model calibration matters for trustworthy machine learning. Although modern neural networks can achieve high predictive accuracy, they are often poorly calibrated and produce overconfident predictions \citep{guo2017calibration}. 

Post-hoc methods improve calibration by scaling model predictions after training \citep{kull2019beyond,guo2017calibration}. However, they cannot prevent weight-level uncertainty degradation, and under distribution shift, post-hoc calibration often becomes less effective \citep{desai2020calibration}.

Bayesian methods quantify uncertainty by using BNNs \citep{izmailov2021bayesian,lin2025stochastic,kweon2025uncertainty}, Laplace approximations \citep{yangbayesian}, and Gaussian Processes \citep{rankovic2025gollum}, but they incur high costs at LLM scale. Full-model methods require approximate inference over entire parameter spaces \citep{graves2011practical,blundell2015BBB}; Laplace methods depend on Hessians whose cost grows with model size \citep{daxberger2021laplace, ritter2018scalable}; GPs scale cubically in data \citep{seeger2004gaussian,quinonero2005unifying, ritter2021sparse}; and ensembles multiply inference cost \citep{lakshminarayanan2017simple}. Fully Factorized Gaussian over inducing weights (FFG-U) \citep{ritter2021sparse} introduces inducing weights via a Kronecker-factorized decomposition of SGP weights, yielding a low-dimensional inducing weight matrix $U$. We are therefore inspired by FFG-U to model uncertainty within the low-rank subspace of LoRA. We further combine this with normalizing flows to enrich the posterior over $\Delta W$, improving its expressiveness with low overhead.

Parameter-Efficient Fine-Tuning (PEFT) introduces a small set of additional parameters while keeping the base LLM frozen, reducing fine-tuning cost and memory footprint. Methods include adapter-based tuning \citep{houlsby2019parameter,pfeiffer2021adapterfusion}, prompt/prefix tuning \citep{lester2021power,li2021prefix}, and low-rank adaptation (LoRA and its variants) \citep{hu2022lora,zhang2023adaptive,dettmers2023qlora}. However, PEFT techniques employ deterministic updates and do not model uncertainty or optimize calibration objectives within the adapter subspace.

We integrate Sparse Gaussian Processes with LoRA by exploiting a structural isomorphism (shared bilinear form) between Kronecker-factored SGP posteriors and LoRA's factorization, bringing probabilistic inference to LLMs while maintaining PEFT-level efficiency.
\section{Preliminaries}
\label{sec:prelim}
This section reviews LoRA (\S\ref{sec:lora-bg}) and introduces the variational sparse inducing weight model (\S\ref{sec:sparse-inducing}).

\subsection{LoRA: Low-Rank Adaptation}
\label{sec:lora-bg}
Low-Rank Adaptation (LoRA) \citep{hu2022lora} parameterizes the weight update $\Delta W$ as a low-rank factorization $\tfrac{\alpha}{r}BA$ with rank $r \ll \min(d_{\text{in}}, d_{\text{out}})$, where $\alpha$ is a fixed scaling factor. The pre-trained weight $W_{\text{pre}} \in \mathbb{R}^{d_{\text{out}}\times d_{\text{in}}}$ remains frozen during training; only $B$ and $A$ are optimized:
\begin{equation}
\Delta W \;=\; \frac{\alpha}{r}\,BA,
\qquad
B\in\mathbb{R}^{d_{\text{out}}\times r},\;
A\in\mathbb{R}^{r\times d_{\text{in}}}
\label{eq:lora-delta}
\end{equation}
\begin{equation}
y
\;=\;
\bigl(W_{\text{pre}} + \Delta W\bigr)x
\;=\;
W_{\text{pre}}\,x \;+\; \frac{\alpha}{r}\,B\bigl(Ax\bigr)
\label{eq:lora-forward}
\end{equation}
where $x \in \mathbb{R}^{d_{\text{in}}}$ is the layer input and $y$ is the output. This reduces the number of trainable parameters from $d_{\text{in}}d_{\text{out}}$ (full fine-tuning) to $r(d_{\text{in}}+d_{\text{out}})$, and the adapter can be merged into $W_{\text{pre}}$ at inference time with no added latency. LoRA is typically applied to Transformer projection layers (e.g., query, key, value, and output projections; $W_q, W_k, W_v, W_o$). However, the update $\Delta W$ is \emph{deterministic}: it provides a single point estimate of the weight perturbation, capturing no information about uncertainty.

\subsection{Variational Sparse Inducing Weight Model}
\label{sec:sparse-inducing}

\paragraph{Core idea.}
Rather than placing a distribution directly on $W \in \mathbb{R}^{d_{\text{out}} \times d_{\text{in}}}$, we introduce a compact \emph{inducing matrix} $U \in \mathbb{R}^{r \times c}$ with $r \ll d_{\text{out}}$, $c \ll d_{\text{in}}$ that controls the distribution over $W$ \citep{ritter2021sparse, yangbayesian,snelson2005sparse}. As in Sparse GPs, the low-dimensional $U$ acts as a sufficient statistic for the posterior over $W$, keeping inference tractable.

\paragraph{Prior and variational posterior on $U$.}
We place a Gaussian (matrix-normal) prior on $U$:
\begin{equation}
p(U) \;=\; \mathcal{N}\!\big(\mathrm{vec}(U)\mid \mathbf{0},\, K_U\big),
\;
K_U \;=\; K_c \,\otimes\, K_r
\label{eq:prior-u}
\end{equation}
where $K_r \in \mathbb{R}^{r\times r}$ and $K_c \in \mathbb{R}^{c\times c}$ are learnable row and column covariance factors, and $\otimes$ denotes the Kronecker product. The variational posterior is:
\begin{equation}
q(U) = \mathcal{N}\!\big(\mathrm{vec}(U) \mid \mathbf{m}, \mathbf{S}\big)
\label{eq:qu}
\end{equation}
where $\mathbf{m}$ and $\mathbf{S}$ are the variational mean and covariance. Whitening can be applied so that the prior on $U$ becomes standard normal, simplifying the KL computation.

\paragraph{Conditional distribution of $W$ given $U$.}
Given the inducing variables $U$, the weight matrix $W$ follows a Gaussian conditional \citep{lin2025stochastic}:
\begin{equation}
    p(W \mid U) = \mathcal{N}\!\big(W \mid M_W(U), \; \Sigma_{W}\big)
\label{eq:pw}
\end{equation}
where $\Sigma_W \in \mathbb{R}^{d_{\text{out}}d_{\text{in}} \times d_{\text{out}}d_{\text{in}}}$ is the prior conditional covariance. The covariance factors are parameterized as:
\begin{equation}
K_r = Z_r Z_r^\top + D_r^2,
\qquad
K_c = Z_c Z_c^\top + D_c^2
\label{eq:krkc}
\end{equation}
with $Z_r \in \mathbb{R}^{r \times d_{\text{out}}}$, $Z_c \in \mathbb{R}^{c \times d_{\text{in}}}$ learnable, and $D_r \in \mathbb{R}^{r\times r}$, $D_c \in \mathbb{R}^{c\times c}$ diagonal noise matrices. The projection operators are
\begin{equation}
T_r = Z_r^\top K_r^{-1}, \qquad T_c = K_c^{-1} Z_c
\label{eq:trtc}
\end{equation}
where the subscripts $r$ and $c$ denote \emph{row} and \emph{column}, indicating that $T_r$ acts on the row space and $T_c$ on the column space of $U$. Together they define the conditional mean of $W$ as a bilinear projection (derivation in Appendix~\ref{app:derivations}):
\begin{equation}
    \label{eq:qw2}
    M_W(U) = T_r \, U \, T_c
\end{equation}

\paragraph{Marginal distribution and ELBO.}
The marginal distribution over $W$ is obtained by integrating out $U$:
\begin{equation}
q(W) = \int p(W \mid U) \, q(U) \, \mathrm{d}U
\label{eq:qw}
\end{equation}
Optimizing only the low-dimensional variational parameters of $U$ thus suffices to approximate the high-dimensional posterior over $W$. The variational evidence lower bound takes the form:
\begin{equation}
\begin{split}
\mathcal{L} =
\mathbb{E}_{q(W)} \big[\log p(\mathcal{D} \mid W)\big]
- \mathrm{KL}\!\big(q(U)\,\|\,p(U)\big)
-
\\
\mathbb{E}_{q(U)} \Big[ \mathrm{KL}\!\big(q(W \mid U)\,\|\,p(W \mid U)\big) \Big]
\end{split}
\label{eq:elbo}
\end{equation}
where the three terms are: expected log-likelihood, KL over inducing variables, and conditional KL. The variational conditional has the same mean $M_W(U)$ as the prior conditional but a scaled covariance:
\begin{equation}
q(W \mid U) \;=\; \mathcal{N}\!\big(W \mid M_W(U),\; \lambda^2 \Sigma_W\big)
\label{eq:qwgivenu}
\end{equation}
where $\lambda > 0$ is a learnable scale. Because $q(W \mid U)$ and $p(W \mid U)$ share the same mean and differ only by $\lambda$, the conditional KL admits a \emph{closed-form} expression independent of $U$ (see \S\ref{sec:method}).

\begin{proposition}[KL invariance under $T_\phi$]\label{prop:u-space-kl-main}
Let $T_\phi$ be invertible and set $\Delta W=T_\phi(U)$. With pushforwards
$q_\phi := T_{\phi\#}q_\psi$ and $p_\phi := T_{\phi\#}p$, we have
\begin{equation}\label{eq:kl-inv-main}
\mathrm{KL}\!\big(q_\phi\,\|\,p_\phi\big)=\mathrm{KL}\!\big(q_\psi\,\|\,p\big)
\end{equation}
Hence, the ELBO written in $U$-space equals the ELBO written in $\Delta W$-space.
\emph{Proof.} See Appendix~\ref{proposition_appendix}.
\end{proposition}

This result guarantees that we can optimize the ELBO in the compact $U$-space while the induced posterior over $\Delta W$ remains consistent.

\section{Methodology: \ours{}}
\label{sec:method}
\begin{figure*}[!t]
    \centering
    \includegraphics[width=0.8 \linewidth]{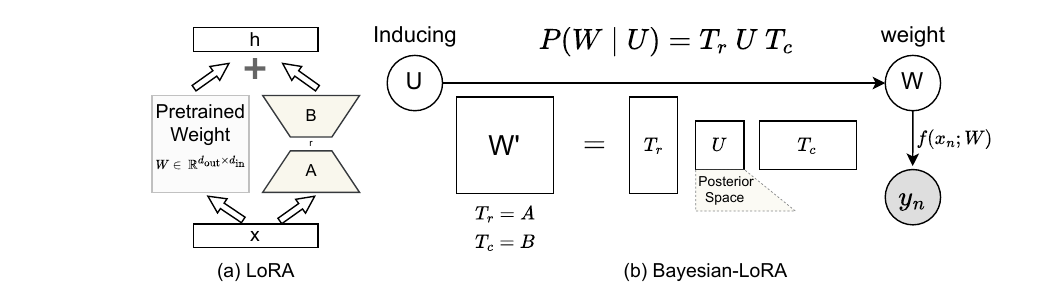}
    \caption{\ours{} overview. (a) Standard LoRA decomposes the weight update into low-rank matrices $B$ and $A$. (b) \ours{} places a Sparse Gaussian Process prior over the LoRA subspace. Inducing variables $U$ are transformed by a normalizing flow $T_\phi$ and projected via conditional Gaussians to produce stochastic LoRA matrices $B, A$. The effective weight $W_{\text{merged}} = W_{\text{pre}} + \frac{\alpha}{r} BA$ is used for the forward pass; Monte Carlo samples capture epistemic uncertainty.}
\label{fig:plate_inducing_weights}
\end{figure*}
\ours{} replaces the deterministic low-rank update of LoRA with a probabilistic formulation. The LoRA weight matrices $B$ and $A$ are treated as random variables, each modeled by a per-layer inducing-variable posterior enriched by a normalizing flow; this architecture is illustrated in Figure~\ref{fig:plate_inducing_weights}. We describe how LoRA maps to this Bayesian formulation (\S\ref{lora2bayesian}), then derive the training objective (\S\ref{sec:elbo-derivation}).

\subsection{From LoRA to \ours{}}
\label{lora2bayesian}

\paragraph{Probabilistic low-rank update.}
Recall that LoRA uses a deterministic rank-$r$ update:
\begin{equation}
\Delta W_{\text{LoRA}} \;=\; \tfrac{\alpha}{r}\, B A,
\qquad B \in \mathbb{R}^{d_{\text{out}}\times r},\; A \in \mathbb{R}^{r\times d_{\text{in}}}
\label{eq:method-lora-delta}
\end{equation}
In \ours{}, we replace this with a stochastic update. For each target layer, we introduce a low-dimensional inducing matrix $U\in\mathbb{R}^{r\times c}$ and ``diffuse'' its information into the high-dimensional weight space via the conditional Gaussian $p(W\mid U)$ (Eq.~\eqref{eq:pw}). The conditional mean $M_W(U) = T_r\,U\,T_c$ (Eq.~\eqref{eq:qw2}) acts as a stochastic analog of the LoRA update $\tfrac{\alpha}{r}BA$: each Monte Carlo sample of $U$ produces a different weight realization, and the spread of these realizations encodes epistemic uncertainty.

Compared to LoRA's deterministic $\Delta W=\frac{\alpha}{r}BA$, \ours{} randomizes this update and quantifies uncertainty: $\Delta W$ is induced by the posterior uncertainty of $U$.

\paragraph{Normalizing flow for posterior flexibility.}
A purely Gaussian posterior on $U$ may be too restrictive for capturing weight-space uncertainty of LLMs. Following \citep{lin2025flow}, we enrich the variational family while keeping the parameter count small by placing a normalizing flow \citep{rezende2015variational,papamakarios2017masked} on top of a diagonal-Gaussian base distribution:
\begin{equation}
\begin{split}
q_0(U_0) \;&=\; \mathcal{N}\!\big(\mathrm{vec}(U_0) \mid \mathbf{m},\, \mathrm{diag}(\boldsymbol{\sigma}^{2})\big),\; \boldsymbol{\sigma}\in\mathbb{R}^{rc}_{>0},
\\
 U \;&=\; T_\phi(U_0)
\end{split}
\label{eq:method-flow-map}
\end{equation}
Here, $T_\phi$ is an invertible, differentiable map; in practice, we use a lightweight row-wise Masked Autoregressive Flow (MAF). When $T_\phi$ is the identity, $q_\phi(U)$ reduces to the diagonal-Gaussian baseline; increasing flow capacity improves posterior expressiveness and downstream calibration. The Gaussian prior on $U$ follows Eq.~\eqref{eq:prior-u}:
$p(U) = \mathcal{N}\big(\mathrm{vec}(U) \mid \mathbf{0}, K_c \otimes K_r \big)$.

\subsection{Training Objective: Flow-Augmented ELBO}
\label{sec:elbo-derivation}

By the change-of-variables formula, the flow-transformed density of $U$ is:
{\small
\begin{equation}
\log q_\phi(U)
=\log q_0\!\big(T_\phi^{-1}(U)\big)
-\; \log\!\Big| \det J_{T_\phi}\!\big(T_\phi^{-1}(U)\big) \Big|
\label{eq:method-flow-density}
\end{equation}}

Substituting $q_\phi(U)$ into the standard SGP ELBO (Eq.~\eqref{eq:elbo}) yields:
\begin{equation}
\label{eq:elbonew}
\begin{split}
\mathcal{L}&_{\text{ELBO}}
= \mathbb{E}_{U_0\sim q_0,\;\varepsilon}\!\big[\log p(\mathcal{D}\mid W)\big] \\
-& \mathbb{E}_{U_0\sim q_0}\!\Big[ \log q_0(U_0) - \log\!\big|\det J_{T_\phi}(U_0)\big|
\\
-& \log p \big(T_\phi(U_0)\big) \Big]- \frac{D}{2}\Big(\lambda^2 - 1 - 2\log\lambda\Big)
\end{split}
\end{equation}
where $d_{W,\ell} = d_{\text{out},\ell}\, d_{\text{in},\ell}$ is the number of weight entries in the $\ell$-th replaced layer and $D=\sum_{\ell \in \mathcal{L}} d_{W,\ell}$ is the total across all such layers. The three terms correspond to: (1) the expected log-likelihood, approximated via Monte Carlo sampling; (2) the KL divergence over inducing variables $\mathrm{KL}(q_\phi(U)\,\|\,p(U))$, which by Proposition~\ref{prop:u-space-kl-main} equals the KL in $\Delta W$-space; and (3) the conditional KL $\mathrm{KL}(q(W\mid U)\,\|\,p(W\mid U))$, which has a closed form that is independent of $U$. See Appendix~\ref{app:elbo-terms} for detailed interpretations.

The Jacobian determinant $\log|\det J_{T_\phi}(U_0)|$ is computed efficiently by the autoregressive structure of the MAF.

\paragraph{Structural isomorphism with LoRA.} The Kronecker decomposition $K_U = K_c \otimes K_r$ generates the conditional mean $M_W(U) = T_r\,U\,T_c$ (Eq.~\eqref{eq:qw2}), which exhibits a \emph{structural isomorphism} with LoRA's $\Delta W = \frac{\alpha}{r}BA$ in the sense of shared bilinear functional form: the left projector $T_r \in \mathbb{R}^{d_{\text{out}}\times r}$ plays an analogous role to $B$, the right projector $T_c \in \mathbb{R}^{c\times d_{\text{in}}}$ to $A$, and the inducing variable $U \in \mathbb{R}^{r\times c}$ replaces the deterministic product with a stochastic low-rank core. When the inducing dimensions match the LoRA rank ($r = c = \text{LoRA rank}$), \ours{} produces weight updates in the same low-rank subspace while adding uncertainty quantification. We emphasize that this is a \emph{functional} isomorphism (both produce low-rank bilinear weight updates), not a claim of strict algebraic equivalence. Standard LoRA directly optimizes $(B, A)$ as free parameters, whereas \ours{} optimizes $U$ with projection operators $T_r, T_c$ derived from the covariance structure.

The isomorphism holds at the level of the posterior mean. Concretely, the form of the weight update $T_r U T_c$ and the rank-$r$ low-rank structure are the same as LoRA's $BA$, so any function that LoRA can fit at rank $r$ can also be fitted by the posterior mean of \ours{}. What changes is how the parameters are learned. LoRA trains $B$ and $A$ directly with standard gradient descent. \ours{} instead trains the inducing matrix $U$ by maximizing the ELBO, whose KL term pulls $U$ toward the GP prior $p(U)$. This is why the two methods reach similar accuracy but produce different confidence estimates: they fit functions from the same family, but follow different training paths.

\begin{corollary}[Deterministic limit]\label{cor:lora-map}
Let $q(U)=\delta(U-U^*)$ be a point-mass posterior and $\lambda\to 0$. Then the \ours{} weight update reduces to the deterministic form $\Delta W = T_r\,U^*\,T_c$, recovering a bilinear low-rank update.
\end{corollary}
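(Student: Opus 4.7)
The plan is to apply the two limits sequentially to the marginal $q(W)$ from Eq.~\eqref{eq:qw} and track how every Monte Carlo realisation collapses to the bilinear expression. First I would substitute the point-mass posterior $q(U)=\delta(U-U^*)$ into $q(W)=\int p(W\mid U)\,q(U)\,\mathrm{d}U$; the sifting property of the Dirac immediately yields $q(W)=p(W\mid U^*)$, which by Eq.~\eqref{eq:pw} is the Gaussian $\mathcal{N}\!\bigl(W\mid M_W(U^*),\,\lambda^2\Sigma_W\bigr)$ with mean $M_W(U^*)=T_r\,U^*\,T_c$ from Eq.~\eqref{eq:qw2}.

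Next I would send $\lambda\to 0$. The characteristic function of this Gaussian, $\exp\!\bigl(i\langle t,\mathrm{vec}\,M_W(U^*)\rangle-\tfrac12\lambda^2\, t^{\top}(I\otimes\Sigma_W)\,t\bigr)$, converges pointwise to $\exp\!\bigl(i\langle t,\mathrm{vec}\,M_W(U^*)\rangle\bigr)$, so by L\'{e}vy's continuity theorem $q(W)$ converges weakly to the Dirac mass at $M_W(U^*)$. Equivalently, every draw $W=M_W(U^*)+\lambda\,\epsilon$ with $\epsilon\sim\mathcal{N}(0,\Sigma_W)$ converges in probability, and in $L^2$, to $T_r\,U^*\,T_c$, so the stochastic update collapses deterministically to this bilinear expression.

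Identifying this limiting realisation with the weight perturbation $\Delta W$ used in the forward pass of Eq.~\eqref{eq:lora-forward}, I would conclude $\Delta W=T_r\,U^*\,T_c$, a bilinear map in the core $U^*$ with fixed projectors $T_r\in\mathbb{R}^{d_{\text{out}}\times r}$ and $T_c\in\mathbb{R}^{c\times d_{\text{in}}}$. Setting $B:=T_r$ and $A:=U^*T_c$ (or symmetrically $B:=T_r U^*$ and $A:=T_c$) reproduces the rank-at-most-$r$ factorisation $\Delta W=BA$ of Eq.~\eqref{eq:lora-delta}, with the scalar $\alpha/r$ absorbed into either factor.

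\textbf{Main obstacle.} Both limits are mechanical, so the corollary carries no deep technical difficulty; the only point worth a careful word is the mode of convergence as $\lambda\to 0$. Phrasing it as weak convergence to a Dirac measure is adequate because all downstream quantities of interest (ELBO expectations and forward-pass outputs) are continuous in $W$, and the continuous mapping theorem transfers convergence of the residual $\lambda\,\epsilon$ to them. No further structural argument beyond Eqs.~\eqref{eq:pw}--\eqref{eq:qw2} is required; the bilinear low-rank form is already encoded in the Kronecker factorisation of the prior covariance.
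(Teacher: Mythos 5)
Your proof is correct and follows essentially the same route as the paper's: the point-mass posterior fixes $U=U^*$ via the sifting property, and $\lambda\to 0$ eliminates the conditional noise in Eq.~\eqref{eq:pw}, leaving $\Delta W = M_W(U^*) = T_r U^* T_c$. The characteristic-function / L\'evy-continuity argument and the explicit factorization into $B:=T_r$, $A:=U^*T_c$ are harmless elaborations that the paper states more tersely (``sampling yields $U^*$ with probability one'' and ``$W=M_W(U^*)$ deterministically''), but the substance is identical.
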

\begin{proof}
Under $q(U)=\delta(U-U^*)$, sampling $U\sim q$ generates $U^*$ with probability one. The conditional mean (Eq.~\eqref{eq:qw2}) gives $M_W(U^*)=T_r\,U^*\,T_c$. As $\lambda\to 0$, the variational conditional noise $\lambda\,\Sigma_W^{1/2}\varepsilon\to 0$ (Eq.~\eqref{eq:qwgivenu}), so $W=M_W(U^*)$ deterministically.
\end{proof}
As illustrated in Figure~\ref{fig:plate_inducing_weights}, each Monte Carlo sample of $U$ produces a different weight realization through this bilinear structure, and the spread of realizations encodes epistemic uncertainty. The complete training procedure is given in Algorithm~\ref{alg:bayeslora-train}.

\begin{algorithm*}[!t]
\caption{\ours{} (replace $B,A$): One Training Step}
\label{alg:bayeslora-train}
\begin{algorithmic}[1]
\STATE \textbf{Input:} Batch $\mathcal{B}$, target layers $\mathcal{L}$, rank $r$, scale $\alpha$, MC samples $N$, noise scale $\lambda$, flow $T_\phi$, base distribution $q_0$, per-layer inducing locations $(Z_r^{A/B}, Z_c^{A/B})$ and covariance factors $(K_r^{A/B}, K_c^{A/B})$
\STATE \textbf{Precompute:} For each layer $\ell \in \mathcal{L}$, compute:
\STATE \quad $T_r^A = (Z_r^A)^\top (K_r^A)^{-1}$, \quad $T_c^A = (K_c^A)^{-1} Z_c^A$ \COMMENT{Projection operators for $A$, Eq.~\eqref{eq:trtc}}
\STATE \quad $T_r^B = (Z_r^B)^\top (K_r^B)^{-1}$, \quad $T_c^B = (K_c^B)^{-1} Z_c^B$ \COMMENT{Projection operators for $B$, Eq.~\eqref{eq:trtc}}
\STATE \quad $\Sigma_{A,\ell}^{1/2}$, $\Sigma_{B,\ell}^{1/2}$ \COMMENT{Covariance factors, Eq.~\eqref{eq:pw}}
\FOR{$\ell \in \mathcal{L}$}
    \STATE \textbf{Sample inducing variables:} $U_0^{(1)}, \ldots, U_0^{(N)} \sim q_0(U_0)$ \COMMENT{Base Gaussian, Eq.~\eqref{eq:method-flow-map}}

    \STATE \textbf{Apply flow transform:} $U^{(n)} \gets T_\phi(U_0^{(n)})$ for $n = 1, \ldots, N$ \COMMENT{Normalizing flow, Eq.~\eqref{eq:method-flow-map}}

    \FOR{$n = 1$ to $N$}
        \STATE \textbf{Compute conditional means:}
        \STATE \quad $\bar{A}_\ell^{(n)} = T_{r,\ell}^A \, U^{(n)} \, T_{c,\ell}^A$ \COMMENT{Mean of $A$ given $U$, Eq.~\eqref{eq:qw2}}
        \STATE \quad $\bar{B}_\ell^{(n)} = T_{r,\ell}^B \, U^{(n)} \, T_{c,\ell}^B$ \COMMENT{Mean of $B$ given $U$, Eq.~\eqref{eq:qw2}}

        \STATE \textbf{Sample Gaussian noise:} $\varepsilon_A^{(n)}, \varepsilon_B^{(n)} \sim \mathcal{N}(0, I)$

        \STATE \textbf{Add conditional variance:}
        \STATE \quad $A_\ell^{(n)} = \bar{A}_\ell^{(n)} + \lambda \, \Sigma_{A,\ell}^{1/2} \varepsilon_A^{(n)}$ \COMMENT{Sample from $q(W_A \mid U)$, Eq.~\eqref{eq:qwgivenu}}
        \STATE \quad $B_\ell^{(n)} = \bar{B}_\ell^{(n)} + \lambda \, \Sigma_{B,\ell}^{1/2} \varepsilon_B^{(n)}$ \COMMENT{Sample from $q(W_B \mid U)$, Eq.~\eqref{eq:qwgivenu}}

        \STATE \textbf{Form LoRA update:} $\Delta W_\ell^{(n)} = \frac{\alpha}{r} B_\ell^{(n)} A_\ell^{(n)}$ \COMMENT{Low-rank update, Eq.~\eqref{eq:lora-delta}}
        
        \STATE \textbf{Effective weight:} $W_{\text{merged},\ell}^{(n)} = W_{\text{pre},\ell} + \Delta W_\ell^{(n)}$ \COMMENT{Eq.~\eqref{eq:lora-forward}}
    \ENDFOR
\ENDFOR

\STATE \textbf{Compute ELBO:}
\STATE \quad Likelihood: $\mathcal{L}_{\text{data}} = \frac{1}{N} \sum_{n=1}^N \log p(\mathcal{B} \mid \{W_{\text{merged},\ell}^{(n)}\}_{\ell \in \mathcal{L}})$ \COMMENT{Expected log-likelihood, Eq.~\eqref{eq:elbonew}}
\STATE \quad KL (inducing): $\mathrm{KL}_U = \mathrm{KL}(q_\phi(U) \,\|\, p(U))$ \COMMENT{KL in $U$-space, Eqs.~\eqref{eq:method-flow-density}--\eqref{eq:elbonew}}
\STATE \quad KL (conditional): $\mathrm{KL}_W = \frac{D}{2}(\lambda^2 - 1 - 2\log\lambda)$ where $D = \sum_{\ell \in \mathcal{L}} d_{\text{out},\ell} \times d_{\text{in},\ell}$ \COMMENT{Closed-form, Eq.~\eqref{eq:app-klW}}
\STATE \quad $\mathcal{L}_{\text{ELBO}} = \mathcal{L}_{\text{data}} - \mathrm{KL}_U - \mathrm{KL}_W$ \COMMENT{Eq.~\eqref{eq:elbonew}}

\STATE \textbf{Output:} $\mathcal{L}_{\text{ELBO}}$ for gradient-based optimization

\end{algorithmic}
\end{algorithm*} 
\section{Experiments}
\label{sec:experiments}

\begin{table*}[!htb]
\centering
\renewcommand{\arraystretch}{1.07}
\captionsetup{skip=4pt}
\setlength{\tabcolsep}{1.8pt}
\caption{Results on six commonsense reasoning benchmarks. We report Accuracy (ACC~$\uparrow$), Expected Calibration Error (ECE~$\downarrow$; 15 bins), and Negative Log-Likelihood (NLL~$\downarrow$) at the early-stop checkpoint of each method. Values are mean\,$\pm$\,{std} over three seeds. Inducing-point dimensions are set to $r = c = 9$ (matching the LoRA rank) for a fair parameter-count comparison.}
\label{tab:main}

\begin{adjustbox}{width=\textwidth}
\begin{tabular}{clccccccc}
\toprule
\textbf{Metrics} & \multicolumn{1}{c}{\textbf{Methods}}
 & \textbf{WinoGrande-S} & \textbf{ARC-C} & \textbf{ARC-E} & \textbf{WinoGrande-M} & \textbf{OBQA} & \textbf{BoolQ} \\
\midrule
\multirow{9}{*}{ACC $\uparrow$}
 & MAP \citep{hu2022lora}         & $68.00\pm{0.21}$ & $64.90\pm{1.10}$ & $85.20\pm{0.60}$ & $73.70\pm{0.90}$ & $77.70\pm{0.80}$ & $85.80\pm{0.40}$ \\
 & Dropout \citep{gal2016dropout} & $66.70\pm{0.30}$ & $64.90\pm{1.90}$ & $85.10\pm{0.50}$ & $73.50\pm{0.90}$ & $77.70\pm{0.20}$ & $85.90\pm{0.40}$ \\
 & Ckpt Ens \citep{huang2017snapshot} & $66.70\pm{0.30}$ & $64.90\pm{1.10}$ & $85.20\pm{0.60}$ & $73.80\pm{1.00}$ & $78.20\pm{0.20}$ & $85.40\pm{0.30}$ \\
 & Temp \citep{guo2017calibration}    & $67.00\pm{0.60}$ & $64.90\pm{1.10}$ & $85.20\pm{0.60}$ & $73.70\pm{0.90}$ & $77.70\pm{0.80}$ & $85.80\pm{0.40}$ \\
 & BBB \citep{blundell2015BBB} &
     $56.54\pm{7.87}$ &
     $68.13\pm{1.27}$ &
     ${77.06\pm{0.27}}$ &
     $73.63\pm{2.44}$ &
     ${77.02\pm{0.23}}$ &
     ${83.17\pm{0.53}}$ \\
 & \makecell[l]{LLLA (post-hoc) \citep{yangbayesian}} & $66.90\pm{0.50}$ & $66.10\pm{0.60}$ & $84.80\pm{0.50}$ & $73.70\pm{0.90}$ & $77.60\pm{0.70}$ & $85.80\pm{0.40}$ \\
 & \makecell[l]{LA (post-hoc) \citep{yangbayesian}}   & $66.90\pm{0.60}$ & $66.90\pm{1.10}$ & $85.40\pm{0.40}$ & $73.70\pm{1.00}$ & $78.10\pm{0.70}$ & $85.80\pm{0.40}$ \\
 & BLoB ($\text{N}=10$) \citep{wang2024blob} &
     $69.07\pm{0.34}$ &
     $68.81\pm{1.09}$ &
     $85.56\pm{0.35}$ &
     $73.69\pm{0.17}$ &
     $81.52\pm{0.74}$ &
     \best{86.99\pm{0.24}}\\
\addlinespace
\rowcolor{ourshl}
\cellcolor{white} 
& {\ours{} ($\text{N}=4$) (End-to-End)}
& {\best{70.90\pm{0.10}}}
& {\best{68.90\pm{0.20}}}
& {\best{85.90\pm{0.30}}}
& {\best{74.30\pm{0.20}}}
& {\best{81.60\pm{0.10}}}
& {$86.10\pm{0.20}$} \\
\midrule
\multirow{9}{*}{ECE $\downarrow$}
 & MAP \citep{hu2022lora}      & $30.80\pm{1.80}$ & $26.10\pm{1.40}$ & $8.90\pm{0.30}$ & $24.90\pm{1.30}$ & $9.80\pm{1.00}$ & $7.40\pm{0.10}$ \\
 & Dropout \citep{gal2016dropout} & $29.50\pm{1.60}$ & $25.60\pm{0.70}$ & $8.80\pm{0.60}$ & $23.50\pm{1.20}$ & $8.80\pm{0.80}$ & $7.50\pm{0.10}$ \\
 & Ckpt Ens \citep{huang2017snapshot}   & $25.20\pm{1.60}$ & $26.10\pm{1.40}$ & $8.90\pm{0.30}$ & $22.80\pm{1.40}$ & $4.70\pm{0.50}$ & $3.20\pm{0.50}$ \\
 & Temp \citep{guo2017calibration}      & $12.80\pm{0.90}$ & \best{4.60\pm{1.00}} & $4.70\pm{0.80}$ & $6.30\pm{1.60}$ & $7.20\pm{2.60}$ & $2.50\pm{0.30}$ \\
 & BBB \citep{blundell2015BBB} &
     $21.81\pm{12.95}$ &
     $26.23\pm{1.47}$ &
     $12.28\pm{0.58}$ &
     $15.76\pm{4.71}$ &
     $11.38\pm{1.07}$ &
     $3.74\pm{0.10}$ \\
 & LLLA (post-hoc) \citep{yangbayesian} & $11.60\pm{1.30}$ & $5.60\pm{2.10}$ & $4.20\pm{0.30}$ & $3.80\pm{1.40}$ & $5.40\pm{0.40}$ & $1.70\pm{0.50}$ \\
 & LA (post-hoc) \citep{yangbayesian}   & $7.80\pm{1.90}$ & $7.50\pm{1.20}$ & \best{3.40\pm{0.80}} & $4.80\pm{1.60}$ & \best{3.50\pm{0.40}} & $1.90\pm{0.30}$ \\
 & BLoB ($\text{N}=10$) \citep{wang2024blob} &
     $9.35\pm{1.37}$ &
     $9.59\pm{1.88}$ &
     $3.64\pm{0.53}$ &
     $3.01\pm{0.12}$ &
     $3.77\pm{1.47}$ & \best{1.41\pm{0.19}} \\
\addlinespace
\rowcolor{ourshl}
\cellcolor{white} 
& \ours{} ($\text{N}=4$) (End-to-End)
& \best{4.90\pm{0.20}}
& $9.20\pm{0.70}$
& $5.30\pm{0.10}$
& \best{3.00\pm{0.10}}
& $5.70\pm{0.20}$
& $2.10\pm{0.60}$ \\
\midrule
\multirow{9}{*}{NLL $\downarrow$}
 & MAP \citep{hu2022lora}         & $2.75\pm{0.57}$ & $1.64\pm{0.19}$ & $0.54\pm{0.03}$ & $2.43\pm{0.50}$ & $0.71\pm{0.03}$ & $0.43\pm{0.01}$ \\
 & Dropout \citep{gal2016dropout} & $2.54\pm{0.49}$ & $1.55\pm{0.16}$ & $0.52\pm{0.04}$ & $2.12\pm{0.35}$ & $0.71\pm{0.04}$ & $0.43\pm{0.01}$ \\
 & Ckpt Ens \citep{huang2017snapshot}   & $1.31\pm{0.04}$ & $1.64\pm{0.18}$ & $0.54\pm{0.03}$ & $1.89\pm{0.24}$ & $0.65\pm{0.02}$ & $0.35\pm{0.01}$ \\
 & Temp \citep{guo2017calibration}      & $0.68\pm{0.01}$ & $0.90\pm{0.01}$ & $0.43\pm{0.02}$ & $0.58\pm{0.01}$ & $0.67\pm{0.02}$ & $0.35\pm{0.00}$ \\
 & BBB \citep{blundell2015BBB} &
     $1.40\pm{0.55}$ &
     $2.23\pm{0.04}$ &
     $0.91\pm{0.06}$ &
     $0.84\pm{0.15}$ &
     $0.66\pm{0.05}$ &
     ${0.31\pm{0.00}}$ \\
 & LLLA (post-hoc) \citep{yangbayesian} & $0.68\pm{0.01}$ & $0.94\pm{0.02}$ & $0.44\pm{0.01}$ & $0.56\pm{0.01}$ & $0.66\pm{0.02}$ & $0.35\pm{0.00}$ \\
 & LA (post-hoc) \citep{yangbayesian}   & $0.66\pm{0.02}$ & $0.86\pm{0.02}$ & $0.41\pm{0.02}$ & $0.55\pm{0.01}$ & $0.62\pm{0.01}$ & $0.34\pm{0.00}$ \\
 & BLoB ($\text{N}=10$) \citep{wang2024blob} &
     \best{0.63\pm{0.01}} &
     $0.78\pm{0.02}$ &
     ${0.40\pm{0.01}}$ &
     \best{0.54\pm{0.00}} &
     ${0.50\pm{0.01}}$ &
     ${0.31\pm{0.00}}$ \\
\addlinespace
\rowcolor{ourshl}
\cellcolor{white} 
& \ours{} ($\text{N}=4$) (End-to-End)
& $0.79\pm{0.01}$
& \makecell[c]{\best{0.77\pm{0.05}}}
& \best{0.38\pm{0.09}}
& $0.62\pm{0.11}$
& \best{0.49\pm{0.02}}
& \best{0.29\pm{0.01}} \\
\bottomrule
\end{tabular}
\end{adjustbox}
\end{table*}

We empirically evaluate \ours{} along three dimensions: (1) accuracy under a compute budget comparable to standard LoRA; (2) calibration and likelihood improvements; and (3) cost-effectiveness relative to existing Bayesian and post-hoc methods.

\begin{table*}[!htb]
\centering
\caption{Language modeling on \textbf{WikiText-2} (validation/test).
We report NLL~$\downarrow$, Brier score~$\downarrow$, and ECE~$\downarrow$ (15 bins) for both \emph{all tokens} and the \emph{top 5\% most uncertain tokens} (selected by MAP predictive entropy).}
\label{tab:wikitext2-compact}
\small
\resizebox{\linewidth}{!}{%
\begin{tabular}{P{3.4cm}C{1cm}C{1cm}C{1cm}C{1cm}C{1cm}C{1cm}C{1cm}C{1cm}C{1cm}C{1cm}C{1cm}C{1cm}}
\toprule
& \multicolumn{6}{c}{All tokens} & \multicolumn{6}{c}{Top 5\% entropy tokens} \\
\cmidrule(lr){2-7}\cmidrule(lr){8-13}
& \multicolumn{3}{c}{Validation} & \multicolumn{3}{c}{Test}
& \multicolumn{3}{c}{Validation} & \multicolumn{3}{c}{Test} \\
\cmidrule(lr){2-4}\cmidrule(lr){5-7}\cmidrule(lr){8-10}\cmidrule(lr){11-13}
Method & NLL $\downarrow$ & Brier $\downarrow$ & ECE $\downarrow$ & NLL $\downarrow$ & Brier $\downarrow$ & ECE $\downarrow$
       & NLL $\downarrow$ & Brier $\downarrow$ & ECE $\downarrow$ & NLL $\downarrow$ & Brier $\downarrow$ & ECE $\downarrow$ \\
\midrule
LoRA (MAP)             & 1.76 & 0.52 & 1.68 & 1.75 & 0.52 & 1.48 & 5.16 & 0.97 & 0.82 & 5.16 & 0.97 & 1.60 \\
\addlinespace
LoRA + Temp            & 1.78 & 0.51 & 1.62 & 1.77 & 0.50 & 1.54 & 5.22 & 0.92 & 0.81 & 5.25 & 0.92 & 1.49 \\
Dropout ($\text{N}=4$)       & 1.77 & 0.50 & 1.54 & 1.70 & 0.50 & 1.59 & 5.21 & 0.94 & \textbf{0.79} & 5.19 & 0.95 & 1.51 \\
\addlinespace
\rowcolor{ourshl} 
\ours{} (N = 1)    & 1.73 & 0.51 & 1.46 & 1.71 & 0.51 & 1.51 & 5.14 & 0.95 & 0.80 & 5.14 & 0.92 & 1.31 \\
\rowcolor{ourshl} 
\ours{} (N = 2)    & \textbf{1.70} & \textbf{0.48} & \textbf{1.36} & \textbf{1.66} & \textbf{0.48} & \textbf{1.41} & \textbf{5.13} & \textbf{0.91} & \textbf{0.79} & \textbf{5.12} & \textbf{0.90} & \textbf{1.26} \\
\bottomrule
\end{tabular}%
}
\end{table*}

\begin{table*}[!htb]
\centering
\caption{Performance on the \textbf{MATH} dataset with large-scale models.
We report Chain-of-Thought NLL (CoT-NLL), CoT Expected Calibration Error (CoT-ECE),
and final answer accuracy. Training hyperparameters are in Table~\ref{tab:math-hyperparams}.}
\label{tab:math_large_models}
\resizebox{\textwidth}{!}{%
\begin{tabular}{P{5.4cm}C{5.4cm}C{2.9cm}C{2.9cm}C{2.9cm}}
\toprule
\textbf{Model (Zero-shot)} & \textbf{Method} & \textbf{CoT-NLL} $\downarrow$ & \textbf{CoT-ECE} $\downarrow$ & \textbf{Answer Acc.} $\uparrow$ \\
\midrule

\multirow{6}{*}{Qwen2.5-14B-Instruct}
    & Baseline FT & 2.165 & 12.2 & 49.8 \\
    & Dropout \citep{gal2016dropout} & 2.103 & 11.9 & 50.0 \\
    & Temp \citep{guo2017calibration} & 1.96 & 10.7 & 49.9 \\
    & LA (post-hoc) \citep{yangbayesian} & 0.81 & 7.12 & 49.8 \\
    & BLoB ($\text{N}=10$) \citep{wang2024blob} & 1.21 & 8.41 & 47.2 \\
\addlinespace
\rowcolor{ourshl}
\cellcolor{white} 
    & \ours{} ($\text{N}=4$)     & \textbf{0.513} & \textbf{5.81} & \textbf{51.1} \\
\midrule

\multirow{5}{*}{Qwen3-30B-A3B-Instruct-2507}
    & Baseline FT & 1.096 & 8.96 & 61.8 \\
    & Temp \citep{guo2017calibration} & 1.021 & 8.94 & 61.7 \\
    & LA (post-hoc) \citep{yangbayesian} & 0.904 & 7.08 & 61.8 \\
    & BLoB ($\text{N}=10$) \citep{wang2024blob} & 0.993 & 7.15 & 60.4 \\
\addlinespace
\rowcolor{ourshl}
\cellcolor{white} 
    & \ours{} ($\text{N}=4$)    & \textbf{0.721} & \textbf{6.32} & \textbf{61.9} \\
\bottomrule
\end{tabular}
}% resizebox
\end{table*}

We use Llama-2-7B, Qwen2.5-14B-Instruct, and Qwen3-30B-A3B-Instruct-2507 as base backbones, applying LoRA adapters to the query (Q), key (K), and language model head weight matrices through the PEFT library \citep{peft2022}. The corresponding linear layers are replaced with our Bayesian implementation (details in \S\ref{sec:method}). We evaluate on six commonsense reasoning benchmarks (dataset details in Appendix~\ref{app:data}). All comparison methods share the same data splits, compute budget, and decoding settings. We apply label smoothing \citep{szegedy2016rethinking} with smoothing factor $\epsilon_{\text{ls}}=0.1$ during training, which provides mild regularization and complements the Bayesian uncertainty modeling. Early stopping selects the checkpoint with the lowest validation NLL, following standard practice.\footnote{NLL-based early stopping is applied uniformly to all methods.} Each configuration is run with three random seeds; we report mean~$\pm$~std for Accuracy (ACC~$\uparrow$), Expected Calibration Error (ECE~$\downarrow$; 15 bins), and Negative Log-Likelihood (NLL~$\downarrow$) (see Appendix Table~\ref{tab:app-task-overview}). For a fair comparison, all methods (including \ours{}) share the same fixed hyperparameters in Table~\ref{tab:hyperparams}; Bayesian optimization of \ours{}'s learning rate and weight decay is a separate appendix analysis (Table~\ref{tab:bo_before_after_min}).

\subsection{In-Distribution Evaluation}
\label{sec:id-eval}
Table~\ref{tab:main} compares \ours{} against eight baselines on six commonsense reasoning benchmarks. \ours{} achieves the highest accuracy on five of six benchmarks, with improvements of up to $+4.0$ points over standard LoRA. It also obtains the best ECE on two benchmarks (WinoGrande-S with an 84\% reduction over Maximum A Posteriori (MAP), and WinoGrande-M) and the best NLL on four benchmarks. These improvements come from probabilistic training rather than post-hoc rescaling. Metric interpretations are in Appendix~\ref{app:metrics}.

\paragraph{Generative language modeling.}
Table~\ref{tab:wikitext2-compact} evaluates \ours{} on generative language modeling (WikiText-2). \ours{} ($\text{N}=2$) matches or exceeds all baselines across NLL, Brier, and ECE on both validation and test splits. The improvement is pronounced on the top-5\% most uncertain tokens, where \ours{} reduces test ECE from $1.60$ (MAP) to $1.26$. Unlike LA/LLLA, which require Hessian computation in generative settings, \ours{} estimates uncertainty end-to-end.

\paragraph{Practical impact on downstream decisions.}
Calibrated confidences directly affect any policy that uses model probabilities. In selective prediction, a better-calibrated model is more willing to skip examples that it is likely to get wrong, thereby improving the risk-coverage trade-off without changing the classifier. In confidence-based routing, low confidence can trigger a more expensive call, such as a larger model or verifier. This can help to reduce overconfidence, which would otherwise lead to wrong answers or silent failures. In safety-critical applications such as medical question answering, overconfident errors are far more costly than uncertain ones. Hence, the calibration gains in Tables~\ref{tab:main} and~\ref{table:ood_train} improve real decision quality, not just metrics.

\subsection{Scaling to Larger Architectures}
\label{sec:scaling}

To demonstrate \ours{}'s benefits beyond the 7B scale, we evaluate on mathematical reasoning (MATH) with \textbf{Qwen2.5-14B-Instruct} (a dense 14B model) and \textbf{Qwen3-30B-A3B-Instruct-2507} (hereafter \textbf{Qwen3-30B-A3B}; an MoE model with 30B total parameters but ${\approx}\,3$B active parameters per token).

Table~\ref{tab:math_large_models} compares \ours{} against Baseline FT, Dropout, Temperature Scaling, LA (post-hoc), and BLoB. On Qwen2.5-14B-Instruct, \ours{} achieves CoT-NLL of $0.513$ and CoT-ECE of $5.81$, outperforming all baselines while improving accuracy to $51.1\%$. On Qwen3-30B-A3B, both NLL and ECE improve with no accuracy loss, confirming the $\mathcal{O}(rc)$ overhead remains negligible at scale.

\subsection{Efficiency Analysis}
\label{efficiency}

\begin{table*}[!htb]
\centering
\renewcommand{\arraystretch}{1.20}
\captionsetup{skip=5pt}
\caption{Efficiency comparison normalized to standard LoRA (MAP) on WinoGrande-M. All baseline results are reproduced from open-source code. Inference-cost comparisons use a comparable sample budget ($\text{N}=4$) across sampling-based methods; the accuracy and calibration tables report BLoB at its recommended $\text{N}=10$. KFAC denotes Kronecker-Factored Approximate Curvature.}
\label{tab:efficiency}
\resizebox{\textwidth}{!}{%
\begin{tabular}{l C{4.4cm}C{2.0cm}C{2.0cm}C{2.0cm}C{1.4cm}}
\toprule
\textbf{Method} & \makecell{\textbf{Trainable}\\\textbf{Params}} & \makecell{\textbf{Train time}\\($\times$MAP)} & \makecell{\textbf{Peak mem.}\\($\times$MAP)} & \makecell{\textbf{Inference}\\($\times$MAP)} & \makecell{\textbf{Samples}\\(Val)} \\
\midrule
MAP (LoRA) \citep{hu2022lora}      & 4.48M            & 1.00            & 1.00            & 1.00            & 1 \\
Dropout \citep{gal2016dropout}     & 4.48M            & $\approx 4\times$ & $\approx 1\times$ & $\approx 4\times$ & 4 \\
Ckpt Ens (3) \citep{huang2017snapshot}    & $1\times$ MAP     & $\approx 1\times$ & $\approx 1\times$ & $\approx 3\times$ & 3 \\
Deep Ens (3) \citep{lakshminarayanan2017simple} & $3\times$ MAP     & $\approx 3\times$ & $\approx 3\times$ & $\approx 3\times$ & 3 \\
BBB \citep{blundell2015BBB}       & $\approx 2\times$ MAP & $\approx 4.19\times$ & $\approx 1.05\times$ & $\approx 4.6\times$ & 4 \\
BLoB ($\text{N}=4$) \citep{wang2024blob} & $\approx$1.5$\times$MAP & $\approx 1.11\times$ & $\approx 0.95\times$ & $\approx 6.3\times$ & 4 \\
LLLA (post-hoc) \citep{yangbayesian} & 4.48M + KFAC ($\approx$0.36M) & $\approx 1.052\times$ & $\approx 1.002\times$ & $\approx 1.9\times$ & - \\
LA (post-hoc) \citep{yangbayesian}   & 4.48M + KFAC ($\approx$4.98M) & $\approx 1.117\times$ & $\approx 1.004\times$ & $\approx 4.36\times$ & - \\
%\hdashline
\addlinespace
\rowcolor{ourshl}
\ours{}
& 4.9M
& $\approx 1.229\times$
& $\approx 1.003\times$
& \makecell[c]{$\approx 1.516\times$\\$\approx 2.790\times$}
& \makecell[c]{~~~~~2~~~~~\\~~~~~4~~~~~} \\
\bottomrule
\end{tabular}
}% resizebox
\end{table*}

Table~\ref{tab:efficiency} compares computational overhead, normalized to standard LoRA (MAP):
\begin{itemize}[itemsep=2pt, topsep=3pt, leftmargin=1.5em]
    \item \textbf{Training memory.} Deep Ensembles require ${\sim}3{\times}$ peak memory to train multiple independent models, which becomes prohibitive for 30B+ LLMs. \ours{} requires only $1.003{\times}$ peak memory to quantify uncertainty under the strict memory budget of a single model.
% \dg{So if we start with s bytes of memory, and we add 1.003s bytes, then we will have 2.003s bytes of memory. In other words we use a total of about 2x the original memory. But the data in Table 4 seems to show we use something more like a total of 1.003s bytes. If the correct answer is that we use 1.003s bytes, we should not say that we add $1.003{\times}$ peak memory. We should say we add $0.003{\times}$ peak memory.}
% resolved by changing
% \ours{} adds only $1.003 x->
% \ours{} requires only $1.003 x

    \item \textbf{Training time.} \ours{} requires $1.229{\times}$ the training time of MAP, compared to BBB ($4.19{\times}$), Dropout ($4{\times}$), and Deep Ensembles ($3{\times}$).
    \item \textbf{Inference.} \ours{} supports two modes: (1)~\emph{deterministic mode} merges the posterior mean $W_{\text{merged}}{=}W_{\text{pre}}{+}\frac{\alpha}{r}T_r\mathbb{E}[U]T_c$ into the base weights with zero latency overhead, identical to standard LoRA; (2)~\emph{uncertainty mode} uses $N$ samples when calibrated confidence estimates are required. With $\text{N}=2$, latency is $1.516{\times}$ MAP. All measurements were performed on the hardware described in Appendix \ref{appendix:hyper}, Table~\ref{tab:hyperparams}.
    \item \textbf{Parameters.} \ours{} adds only 0.42M additional parameters (4.9M vs.\ 4.48M), the smallest overhead among all considered Bayesian methods.
\end{itemize}

\begin{table*}[!htbp]
\centering
\renewcommand{\arraystretch}{1.25}
\captionsetup{skip=5pt}
\large
\setlength{\tabcolsep}{5.5pt}
\caption{Ablation on flow depth $L$ in the posterior transform $T_\phi$ (OBQA). Values are macro-averages (mean\,$\pm$\,{std} over three seeds). $\Delta$ denotes the change from the $L{=}1$ baseline. Efficiency is relative to standard LoRA (MAP).}
\label{tab:ablate-flow}
\resizebox{\textwidth}{!}{%
\begin{tabular}{l C{2.3cm}C{2.3cm}C{2.3cm}C{2.3cm}C{2.3cm}C{2.3cm}C{2.3cm}C{2.3cm}}
\toprule
\multirow{2}{*}{\textbf{Flow depth $L$}}
 & \multicolumn{2}{c}{\textbf{ACC} $\uparrow$}
 & \multicolumn{2}{c}{\textbf{ECE} $\downarrow$}
 & \multicolumn{2}{c}{\textbf{NLL} $\downarrow$}
 & \multicolumn{2}{c}{\textbf{Efficiency} ($\times$ MAP)} \\
 \cmidrule(lr){2-3}\cmidrule(lr){4-5}\cmidrule(lr){6-7}\cmidrule(lr){8-9}
 & value & $\Delta$ vs.\ $L{=}1$
 & value & $\Delta$ vs.\ $L{=}1$
 & value & $\Delta$ vs.\ $L{=}1$
 & train time & peak mem. \\
\midrule
0 (pure SGP)
 & 79.0 $\pm$ 0.21 & -2.6
 & 5.8 $\pm$ 0.13 & +0.1
 & 0.58 $\pm$ 0.08 & +0.09
 & 1.19 & 1.002 \\
1
 & 81.6 $\pm$ 0.10 & 0.0
 & 5.7 $\pm$ 0.20 & 0.0
 & 0.49 $\pm$ 0.02 & 0.00
 & 1.23 & 1.003 \\
2
 & 80.8 $\pm$ 0.14 & -0.8
 & 5.6 $\pm$ 0.09 & -0.1
 & 0.52 $\pm$ 0.06 & +0.03
 & 1.30 & 1.008 \\
4
 & 80.9 $\pm$ 0.08 & -0.7
 & 4.9 $\pm$ 0.03 & -0.8
 & 0.48 $\pm$ 0.13 & -0.01
 & 1.38 & 1.010 \\
\bottomrule
\end{tabular}
}% resizebox
\end{table*}

\subsection{Ablation Study}
\label{sec:ablation}

\paragraph{Flow depth.}
Table~\ref{tab:ablate-flow} ablates the normalizing flow depth $L$ on OBQA. Without any flow ($L{=}0$, pure SGP), accuracy drops by $2.6$ points relative to $L{=}1$, confirming that the flow improves posterior expressiveness. A single flow layer ($L{=}1$) provides the best accuracy and a good accuracy--calibration trade-off. Deeper flows ($L=2,4$) further improve ECE at the cost of increased training time, while NLL is already near-optimal at $L=1$. Hence, we adopt $L{=}1$ as the default.

Figure~\ref{fig:rank_ablation} varies the inducing-point dimension $r{=}c$ while keeping other hyperparameters fixed. The main experiments use $r{=}c{=}9$ to match the LoRA rank for a fair comparison. Increasing the inducing dimension improves calibration with diminishing returns beyond $r{=}16$; the parameter counts for each rank are compared in Appendix~\ref{appendix:bo}, Table~\ref{tab:trainable_params_llm}.
\begin{figure}[!htb]
  \centering
  \includegraphics[width=\linewidth]{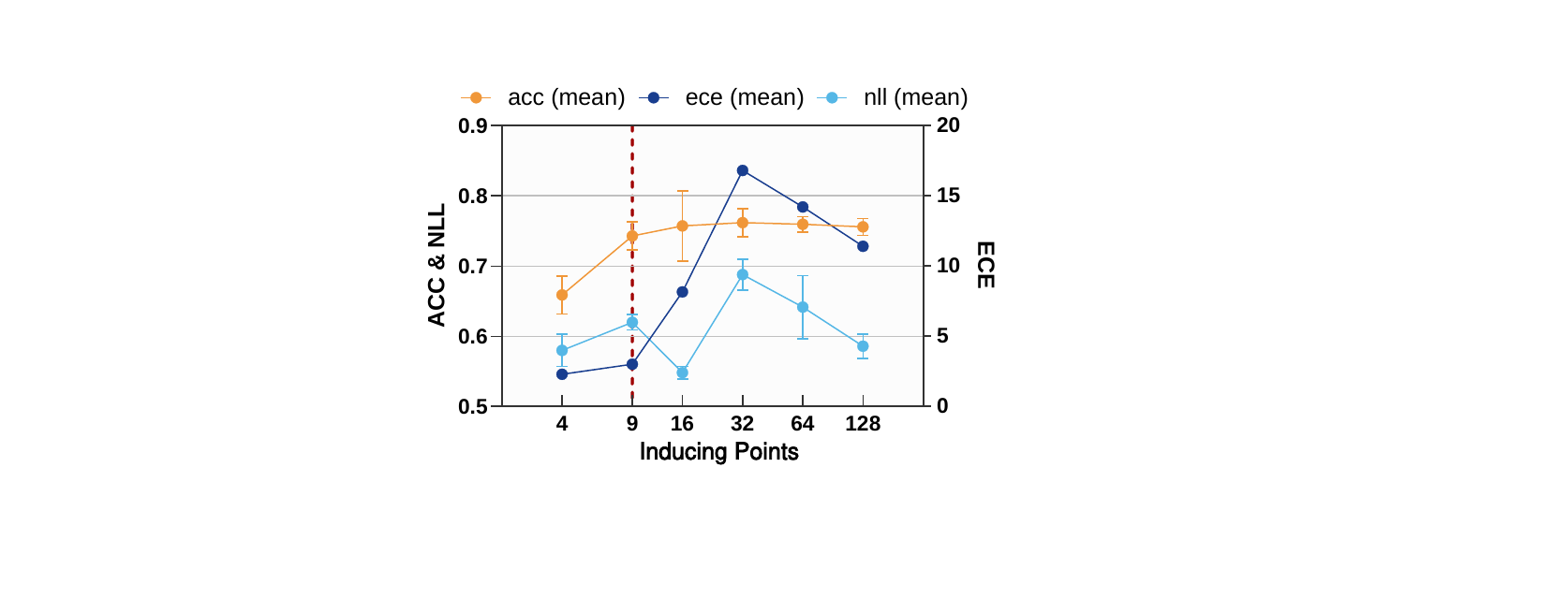}
  \caption{Ablation on inducing-point dimension $r = c$. Increasing the dimension improves calibration (lower ECE) with diminishing returns beyond $r{=}16$. The dashed line shows the default $r=c=9$ used in the main-paper \ours{} experiments.}  \label{fig:rank_ablation}
\end{figure}

\begin{table*}[!thb]
\renewcommand{\arraystretch}{1.2}
\centering
\setlength{\tabcolsep}{1pt}
\caption{Robustness under distribution shift. We evaluate on the in-distribution OBQA test set and six out-of-distribution datasets spanning small shifts (ARC-C, ARC-E) and large shifts (CS, Eng, Law, Health from MMLU). Experimental settings follow \citep{yangbayesian}.}
\label{table:ood_train}
\resizebox{1.0\textwidth}{!}{%
\begin{tabular}{c c c c c c c c c}
\toprule
&  & \multicolumn{1}{c}{ID} & \multicolumn{2}{c}{Smaller Distribution Shift} & \multicolumn{4}{c}{Larger Distribution Shift} \\
\cmidrule(lr){4-5}
\cmidrule(lr){6-9}
\rule{0pt}{2.25ex}
Metrics & Methods & OBQA & ARC-C & ARC-E & CS & Eng & Law & Health \\
\midrule
\multirow{8}{*}{ACC $\uparrow$} \rule{0pt}{2.25ex}
& MAP \citep{hu2022lora}            & $77.7\pm{0.8}$ & $67.9\pm{1.4}$ & $77.7\pm{0.3}$ & $42.0\pm{3.2}$ & $41.2\pm{2.0}$ & $37.4\pm{0.4}$ & $48.3\pm{0.3}$ \\
& Dropout \citep{gal2016dropout}    & $77.7\pm{0.2}$ & $67.7\pm{0.6}$ & $77.2\pm{0.6}$ & $41.9\pm{2.2}$ & $39.6\pm{1.7}$ & ${37.9\pm{0.4}}$ & $48.2\pm{0.9}$ \\
& Ckpt Ens \citep{huang2017snapshot}& $78.2\pm{0.2}$ & $67.9\pm{0.8}$ & $77.4\pm{0.8}$ & $41.1\pm{2.0}$ & $38.7\pm{1.2}$ & $37.7\pm{0.2}$ & $48.2\pm{0.6}$ \\
& Temp \citep{guo2017calibration}    & $77.7\pm{0.8}$ & $68.0\pm{0.2}$ & $76.7\pm{1.0}$ & $43.5\pm{0.9}$ & \best{44.4\pm{2.0}} & $37.4\pm{0.1}$ & $47.7\pm{0.8}$ \\
& BBB \citep{blundell2015BBB}    &
     ${77.0\pm{0.2}}$ &
     $67.3\pm{1.2}$ &
     $75.8\pm{0.8}$ &
     $40.5\pm{0.3}$ &
     $36.7\pm{0.2}$ &
     $36.1\pm{0.2}$ &
     $47.6\pm{0.5}$ \\
& BLoB($\text{N}=10$)  \citep{wang2024blob}
     & $81.5\pm{0.7}$
     & $67.7\pm{1.1}$
     & ${76.3\pm{0.8}}$
     & ${44.6\pm{0.4}}$
     & $42.3\pm{1.7}$
    & $37.4\pm{1.2}$ & $47.3\pm{1.5}$ \\
& LLLA (post-hoc) \citep{yangbayesian}         & $77.6\pm{0.7}$ & $68.1\pm{0.0}$ & $78.1\pm{0.0}$ & ${45.6\pm{0.0}}$ & $38.9\pm{0.0}$ & $37.1\pm{0.0}$ & $48.5\pm{0.0}$ \\
& LA (post-hoc) \citep{yangbayesian}           & $78.1\pm{0.7}$ & ${69.2\pm{0.0}}$ & $78.5\pm{0.0}$ & $45.1\pm{0.0}$ & $39.1\pm{0.0}$ & $37.3\pm{0.0}$ & $49.1\pm{0.0}$ \\
\addlinespace
\rowcolor{ourshl}
\cellcolor{white} 
    & \ours{} ($\text{N}=4$) (End-to-End)
    & \best{81.6\pm{0.1}}
    & \best{69.5\pm{0.1}}
    & \best{78.9\pm{0.2}}
    & \best{46.3\pm{0.1}}
    & ${37.5\pm{0.2}}$
    & \best{37.9\pm{0.1}}
    & \best{50.6\pm{0.1}} \\
\midrule
\multirow{8}{*}{ECE $\downarrow$} \rule{0pt}{2.25ex}
& MAP \citep{hu2022lora}            & $9.80\pm{1.0}$ & $22.2\pm{1.2}$ & $15.8\pm{1.0}$ & $34.2\pm{3.1}$ & $38.4\pm{1.7}$ & $35.2\pm{0.7}$ & $34.2\pm{0.8}$ \\
& Dropout \citep{gal2016dropout}    & $8.80\pm{0.8}$ & $21.4\pm{0.4}$ & $15.5\pm{1.0}$ & $33.7\pm{2.0}$ & $38.6\pm{2.9}$ & $34.2\pm{0.6}$ & $33.5\pm{0.2}$ \\
& Ckpt Ens \citep{huang2017snapshot}& $4.70\pm{0.5}$ & $17.7\pm{0.7}$ & $12.1\pm{0.6}$ & $29.1\pm{2.3}$ & $32.5\pm{1.8}$ & $32.1\pm{0.1}$ & $29.0\pm{0.3}$ \\
& Temp \citep{guo2017calibration}    & $7.20\pm{2.6}$ & $9.41\pm{3.5}$ & $6.33\pm{1.3}$ & $16.4\pm{5.5}$ & $16.1\pm{4.6}$ & $22.7\pm{5.8}$ & $17.4\pm{6.0}$ \\
& BBB \citep{blundell2015BBB} &
     $11.4\pm{1.1}$&
     $19.9\pm{0.7}$&
     $13.4\pm{0.9}$&
     $18.6\pm{1.6}$&
     $22.4\pm{3.1}$&
     $28.1\pm{2.5}$&
     $27.4\pm{1.8}$ \\
& BLoB ($\text{N}=10$)~\citep{wang2024blob} &
    $3.77\pm{1.5}$&
    $9.55\pm{0.4}$&
    $5.48\pm{1.3}$&
    ${12.6\pm{1.7}}$&
    $22.3\pm{1.9}$&
    $25.3\pm{2.1}$&
    $16.4\pm{1.6}$\\
& LLLA (post-hoc) \citep{yangbayesian}         & $5.40\pm{0.4}$ & $21.3\pm{0.0}$ & $14.8\pm{0.0}$ & $30.3\pm{0.0}$ & $39.7\pm{0.0}$ & $33.6\pm{0.0}$ & $33.5\pm{0.0}$ \\
& LA (post-hoc) \citep{yangbayesian} & \best{3.50\pm{0.4}} &
\best{5.50\pm{0.0}} &
\best{3.10\pm{0.0}} &
$14.5\pm{0.0}$ &
\best{12.8\pm{0.0}} &
$23.9\pm{0.0}$ &
$17.6\pm{0.0}$
\\
\addlinespace
\rowcolor{ourshl}
\cellcolor{white} 
& \ours{} ($\text{N}=4$) (End-to-End)
& $5.70\pm{0.2}$
& $8.10\pm{0.1}$
& $5.21\pm{0.1}$
& \best{11.1\pm{0.0}}
& $20.4\pm{0.1}$
& \best{16.5\pm{0.0}}
& \best{12.9\pm{0.0}} \\
\midrule
\multirow{8}{*}{NLL $\downarrow$} \rule{0pt}{2.25ex}
& MAP \citep{hu2022lora}            & $0.71\pm{0.03}$ & $1.30\pm{0.07}$ & $1.04\pm{0.10}$ & $1.90\pm{0.12}$ & $2.19\pm{0.15}$ & $2.12\pm{0.03}$ & $2.09\pm{0.08}$ \\
& Dropout \citep{gal2016dropout}    & $0.71\pm{0.04}$ & $1.24\pm{0.06}$ & $1.01\pm{0.09}$ & $1.86\pm{0.10}$ & $2.14\pm{0.13}$ & $2.09\pm{0.02}$ & $2.05\pm{0.07}$ \\
& Ckpt Ens \citep{huang2017snapshot}& $0.65\pm{0.02}$ & $1.03\pm{0.03}$ & $0.80\pm{0.03}$ & $1.55\pm{0.04}$ & $1.72\pm{0.01}$ & $1.94\pm{0.01}$ & $1.74\pm{0.02}$ \\
& Temp \citep{guo2017calibration}    & $0.67\pm{0.02}$ & $0.90\pm{0.05}$ & $0.66\pm{0.01}$ & $1.31\pm{0.06}$ & $1.32\pm{0.07}$ & $1.65\pm{0.16}$ & $1.36\pm{0.10}$ \\
& BBB \citep{blundell2015BBB} &
$0.66\pm{0.05}$ &
$1.06\pm{0.01}$ &
$0.79\pm{0.02}$ &
$1.49\pm{0.05}$ &
$1.83\pm{0.08}$&
$1.65\pm{0.20}$&
$1.29\pm{0.12}$\\
& BLoB ($\text{N}=10$) \citep{wang2024blob}
& ${0.50\pm{0.01}}$ &
$0.83\pm{0.01}$ &
\best{0.60\pm{0.01}} &
$1.38\pm{0.01}$ &
$1.81\pm{0.10}$&
$2.01\pm{0.09}$&
$1.94\pm{0.08}$
 \\
& LLLA (post-hoc) \citep{yangbayesian} &
$0.66\pm{0.02}$ &
$0.88\pm{0.00}$ &
$0.64\pm{0.00}$ &
$1.28\pm{0.00}$ &
$1.27\pm{0.00}$ &
$1.49\pm{0.00}$ &
$1.31\pm{0.00}$ \\

& LA (post-hoc) \citep{yangbayesian}  &
$0.62\pm{0.01}$ &
$0.85\pm{0.00}$ &
$0.62\pm{0.00}$ &
$1.26\pm{0.00}$ &
$1.27\pm{0.00}$ &
$1.68\pm{0.00}$ &
$1.35\pm{0.00}$
\\

\addlinespace
\rowcolor{ourshl}
\cellcolor{white} 
& \ours{} ($\text{N}=4$) (End-to-End)
& \best{0.49\pm{0.02}}
& \best{0.82\pm{0.04}}
& $0.80\pm{0.04}$
& \best{1.22\pm{0.04}}
& \best{1.26\pm{0.02}}
& \best{1.42\pm{0.04}}
& \best{1.20\pm{0.02}} \\
\bottomrule
\end{tabular}}
\end{table*}
\begin{table}[!tb]
\centering
\caption{MAP recovery validation on OBQA. Degenerate \ours{} uses near-zero posterior variance and no flow, approximating the MAP limit of Corollary~\ref{cor:lora-map}. Results are mean\,$\pm$\,{std} over three seeds.}
\label{tab:map-recovery}

\resizebox{\columnwidth}{!}{%
    \begin{tabular}{lccc}
    \toprule
    \textbf{Method} & \textbf{ACC} $\uparrow$ & \textbf{ECE} $\downarrow$ & \textbf{NLL} $\downarrow$ \\
    \midrule
    LoRA (MAP) & $77.70\pm{0.80}$ & $9.80\pm{1.00}$ & $0.71\pm{0.03}$ \\
    Degenerate \ours{} & $77.81\pm{0.71}$ & $9.77\pm{1.02}$ & $0.70\pm{0.02}$ \\
    \addlinespace
    \rowcolor{ourshl} 
    \ours{} (full) & $81.60\pm{0.10}$ & $5.70\pm{0.20}$ & $0.49\pm{0.02}$ \\
    \bottomrule
    \end{tabular}%
}
\end{table}

\paragraph{Empirical validation of MAP recovery (Corollary~\ref{cor:lora-map}).}
Corollary~\ref{cor:lora-map} predicts that \ours{} reduces to standard LoRA when the posterior collapses to a point mass and the conditional noise vanishes. We verify this empirically on OBQA by training \ours{} with near-degenerate settings: $\lambda_{\text{init}}{=}10^{-4}$, $\lambda_{\max}{=}10^{-4}$, $\sigma_{U,\max}{=}10^{-3}$, and flow depth $L{=}0$ (identity transform). Table~\ref{tab:map-recovery} compares this configuration against standard LoRA with MAP estimation.
\vspace{8pt}

The degenerate configuration matches standard LoRA, confirming that LoRA lies on the boundary of the \ours{} posterior family. The full model improves all metrics by using calibrated uncertainty that the MAP limit discards.

\subsection{Out-of-Distribution Robustness}
\label{sec:ood}

Table~\ref{table:ood_train} evaluates robustness under distribution shift (small: ARC-C/E; large: CS, Eng, Law, Health from MMLU). \ours{} achieves the best OoD accuracy on 5 of 6 shifted datasets.

For ECE, LA (post-hoc) leads \emph{in-distribution} (ID) and \emph{under small shifts} (ARC-C, ARC-E) by precise temperature tuning. \emph{Under large shifts}, \ours{} achieves the best ECE on 3 of 4 domains (CS, Law, Health); LA retains an advantage on Engineering. Post-hoc calibration learns a fixed rescaling that becomes stale under severe shift, while end-to-end training embeds uncertainty into the weights. \ours{} achieves the best NLL on 6 of 7 sets.

\ours{} and post-hoc methods are complementary: post-hoc excels when test data resembles the calibration set, while end-to-end Bayesian training is more robust under distributional mismatch. 

See Appendix~\ref{app:ood-analysis} for details on the analysis and Appendix~\ref{appendix:bo} for hyperparameter optimization.

\section{Conclusion}
\label{sec:conclusion}
We identified a structural isomorphism (shared bilinear functional form) between Kronecker-factored SGP posteriors and LoRA's factorization, with \ours{} reducing to deterministic LoRA in the limit. Our method replaces LoRA's point estimate with a flow-augmented variational posterior for calibration-aware training. Across commonsense reasoning, language modeling, and math benchmarks at scales up to 14B (dense) and 30B (MoE), \ours{} achieves competitive accuracy with improved NLL. ECE gains are largest under distribution shift; post-hoc methods remain effective for small shifts.

\paragraph{Limitations.}
Our per-layer inducing matrices are modeled independently; hierarchical priors could capture inter-layer correlations. The Bayesian optimization analysis (Table~\ref{tab:bo_before_after_min}) shows accuracy-calibration trade-offs with objective-dependent optimal hyperparameters. Extensions to other modalities and instruction-tuning/RLHF remain future work, as do tighter theoretical bounds on the sparse inducing approximation.

\newpage
\section*{Acknowledgements}
This publication is based on research funded by the European Union's Horizon Europe 2021--2027 framework programme, Marie Sk\l{}odowska-Curie Actions, Grant Agreement No.~101072456; Taighde \'Eireann -- Research Ireland under grant number 13/RC/2094\_2 to Lero the Research Ireland Centre for Software and grant number 12/RC/2289\_P2 to Insight Centre for Data Analytics; and the European Research Council (ERC) under the Horizon 2020 research and innovation programme (Grant Agreement No.~884951).

\section*{Impact Statement}

This work improves calibration and uncertainty quantification for fine-tuned LLMs. Well-calibrated models are essential for safe deployment in high-stakes domains such as medical diagnosis and autonomous systems, helping users identify when model outputs may be unreliable. We do not foresee negative societal consequences unique to this work.

\bibliography{icml2026_conference_used}

@Article{ding2023parameter,
  author    = {Ding, Ning and Qin, Yujia and Yang, Guang and Wei, Fuchao and Yang, Zonghan and Su, Yusheng and Hu, Shengding and Chen, Yulin and Chan, Chi-Min and Chen, Weize and others},
  journal   = {Nature Machine Intelligence},
  number    = {3},
  pages     = {220--235},
  title     = {Parameter-efficient fine-tuning of large-scale pre-trained language models},
  year      = {2023},
  volume    = {5},
  publisher = {Nature Publishing Group},
}

@article{li2025uni,
  title={{Uni-LoRA}: One vector is all you need},
  author={Li, Kaiyang and Han, Shaobo and Su, Qing and Li, Wei and Cai, Zhipeng and Ji, Shihao},
  journal={Advances in Neural Information Processing Systems},
  volume={38},
  pages={43816--43841},
  year={2025}
}

@article{shi2025training,
  title={Training-free {Bayesianization} for low-rank adapters of large language models},
  author={Shi, Haizhou and Wang, Yibin and Han, Ligong and Zhang, Huan and Wang, Hao},
  journal={Advances in Neural Information Processing Systems},
  volume={38},
  pages={41663--41700},
  year={2025}
}

@article{rahmati2025c,
  title={{C-LoRA}: Contextual low-rank adaptation for uncertainty estimation in large language models},
  author={Rahmati, Amir Hossein and Jantre, Sanket and Zhang, Weifeng and Wang, Yucheng and Yoon, Byung-Jun and Urban, Nathan and Qian, Xiaoning},
  journal={Advances in Neural Information Processing Systems},
  volume={38},
  pages={67459--67485},
  year={2025}
}

@InProceedings{malladi2023fine,
  author    = {Malladi, Sadhika and Gao, Tianyu and Nichani, Eshaan and Damian, Alex and Lee, Jason D and Chen, Danqi and Arora, Sanjeev},
  booktitle = {Advances in Neural Information Processing Systems},
  title     = {Fine-tuning language models with just forward passes},
  year      = {2023},
  pages     = {53038--53075},
  volume    = {36},
}

@inproceedings{xu2021raise,
  title={Raise a child in large language model: Towards effective and generalizable fine-tuning},
  author={Xu, Runxin and Luo, Fuli and Zhang, Zhiyuan and Tan, Chuanqi and Chang, Baobao and Huang, Songfang and Huang, Fei},
  booktitle={Proceedings of the 2021 Conference on Empirical Methods in Natural Language Processing},
  pages={9514--9528},
  year={2021}
}

@InProceedings{hu2022lora,
  author    = {Hu, Edward J and Shen, Yelong and Wallis, Phillip and Allen-Zhu, Zeyuan and Li, Yuanzhi and Wang, Shean and Wang, Lu and Chen, Weizhu and others},
  booktitle = {International Conference on Learning Representations},
  title     = {{LoRA}: Low-rank adaptation of large language models},
  year      = {2022},
}

@inproceedings{huang2017snapshot,
  title={Snapshot ensembles: Train 1, get {M} for free},
  author={Huang, Gao and Li, Yixuan and Pleiss, Geoff and Liu, Zhuang and Hopcroft, John E and Weinberger, Kilian Q},
  booktitle={International Conference on Learning Representations},
  year={2017}
}

@InProceedings{dettmers2023qlora,
  author    = {Dettmers, Tim and Pagnoni, Artidoro and Holtzman, Ari and Zettlemoyer, Luke},
  booktitle = {Advances in Neural Information Processing Systems},
  title     = {{QLoRA}: Efficient finetuning of quantized {LLMs}},
  year      = {2023},
  pages     = {10088--10115},
  volume    = {36},
}

@misc{peft2022,
  title        = {{PEFT}: State-of-the-art parameter-efficient fine-tuning methods},
  author       = {Mangrulkar, Sourab and Gugger, Sylvain and Debut, Lysandre and Belkada, Younes and Paul, Sayak},
  year         = {2022},
  howpublished = {\url{https://github.com/huggingface/peft}},
  note         = {Accessed: 2026-05-13}
}

@InProceedings{hu2024lora,
  author       = {Hu, Jiamin and Xu, Xuwei and Zou, Zhenmin},
  booktitle    = {International Conference on Medical Imaging and Computer-Aided Diagnosis},
  title        = {{LoRA-MedSAM}: Efficient medical image segmentation},
  year         = {2024},
  organization = {Springer},
  pages        = {154--164},
  volume       = {1372},
}

@InProceedings{zhang2023adaptive,
  author    = {Zhang, Qingru and Chen, Minshuo and Bukharin, Alexander and He, Pengcheng and Cheng, Yu and Chen, Weizhu and Zhao, Tuo},
  booktitle = {International Conference on Learning Representations},
  title     = {Adaptive budget allocation for parameter-efficient fine-tuning},
  year      = {2023},
}

@inproceedings{wang2024drive,
  title={Drive as veteran: Fine-tuning of an onboard large language model for highway autonomous driving},
  author={Wang, Yujin and Huang, Zhaoyan and Liu, Quanfeng and Zheng, Yutong and Hong, Jinlong and Chen, Junyi and Xiong, Lu and Gao, Bingzhao and Chen, Hong},
  booktitle={2024 {IEEE} Intelligent Vehicles Symposium ({IV})},
  pages={502--508},
  year={2024},
  organization={IEEE}
}

@inproceedings{liu2025uncertainty,
  title={Uncertainty quantification and confidence calibration in large language models: A survey},
  author={Liu, Xiaoou and Chen, Tiejin and Da, Longchao and Chen, Chacha and Lin, Zhen and Wei, Hua},
  booktitle={Proceedings of the 31st {ACM} {SIGKDD} Conference on Knowledge Discovery and Data Mining V. 2},
  pages={6107--6117},
  year={2025}
}

@article{burt2020convergence,
  title={Convergence of sparse variational inference in {G}aussian processes regression},
  author={Burt, David R and Rasmussen, Carl Edward and Van Der Wilk, Mark},
  journal={Journal of Machine Learning Research},
  volume={21},
  number={131},
  pages={1--63},
  year={2020}
}

@article{ritter2021sparse,
  title={Sparse uncertainty representation in deep learning with inducing weights},
  author={Ritter, Hippolyt and Kukla, Martin and Zhang, Cheng and Li, Yingzhen},
  journal={Advances in Neural Information Processing Systems},
  volume={34},
  pages={6515--6528},
  year={2021}
}

@InProceedings{guo2017calibration,
  author       = {Guo, Chuan and Pleiss, Geoff and Sun, Yu and Weinberger, Kilian Q},
  booktitle    = {Proceedings of the 34th International Conference on Machine Learning},
  title        = {On calibration of modern neural networks},
  year         = {2017},
  organization = {PMLR},
  pages        = {1321--1330},
  volume       = {70},
}

@article{sankararaman2022bayesformer,
  title={{Bayesformer}: Transformer with uncertainty estimation},
  author={Sankararaman, Karthik Abinav and Wang, Sinong and Fang, Han},
  journal={arXiv preprint arXiv:2206.00826},
  year={2022}
}

@article{wang2023calibration,
  title={Calibration in deep learning: A survey of the state-of-the-art},
  author={Wang, Cheng},
  journal={arXiv preprint arXiv:2308.01222},
  year={2023}
}

@inproceedings{zhou2024calibration,
  title={On calibration of pre-trained code models},
  author={Zhou, Zhenhao and Sha, Chaofeng and Peng, Xin},
  booktitle={Proceedings of the {IEEE/ACM} 46th International Conference on Software Engineering},
  pages={1--13},
  year={2024}
}

@article{ye2024benchmarking,
  title={Benchmarking {LLMs} via uncertainty quantification},
  author={Ye, Fanghua and Yang, Mingming and Pang, Jianhui and Wang, Longyue and Wong, Derek and Yilmaz, Emine and Shi, Shuming and Tu, Zhaopeng},
  journal={Advances in Neural Information Processing Systems},
  volume={37},
  pages={15356--15385},
  year={2024}
}

@inproceedings{lin2025flow,
  title={Flow-induced diagonal {G}aussian {P}rocesses},
  author={Lin, Moule and Patane, Andrea and Jing, Weipeng and Guan, Shuhao and Botterweck, Goetz},
  booktitle={Proceedings of the AAAI Conference on Artificial Intelligence},
  volume={40},
  number={28},
  pages={23550--23558},
  year={2026}
}

@article{chuang2025confident,
  title={Confident or seek stronger: Exploring uncertainty-based on-device {LLM} routing from benchmarking to generalization},
  author={Chuang, Yu-Neng and Yu, Leisheng and Wang, Guanchu and Zhang, Lizhe and Liu, Zirui and Cai, Xuanting and Sui, Yang and Braverman, Vladimir and Hu, Xia},
  journal={arXiv preprint arXiv:2502.04428},
  year={2025}
}

@article{shorinwa2025survey,
  title={A survey on uncertainty quantification of large language models: Taxonomy, open research challenges, and future directions},
  author={Shorinwa, Ola and Mei, Zhiting and Lidard, Justin and Ren, Allen Z and Majumdar, Anirudha},
  journal={{ACM} Computing Surveys},
  year={2025},
  publisher={{ACM} New York, NY}
}

@InProceedings{titsias2009variational,
  author       = {Titsias, Michalis},
  booktitle    = {Proceedings of the 12th International Conference on Artificial Intelligence and Statistics},
  title        = {Variational learning of inducing variables in sparse {G}aussian processes},
  year         = {2009},
  organization = {PMLR},
  pages        = {567--574},
  volume       = {5},
}

@InProceedings{lakshminarayanan2017simple,
  author    = {Lakshminarayanan, Balaji and Pritzel, Alexander and Blundell, Charles},
  booktitle = {Advances in Neural Information Processing Systems},
  title     = {Simple and scalable predictive uncertainty estimation using deep ensembles},
  year      = {2017},
  pages     = {6402--6413},
  volume    = {30},
}

@article{quinonero2005unifying,
  title={A unifying view of sparse approximate {G}aussian process regression},
  author={Qui{\~n}onero-Candela, Joaquin and Rasmussen, Carl Edward},
  journal={Journal of Machine Learning Research},
  volume={6},
  number={Dec},
  pages={1939--1959},
  year={2005}
}

@article{seeger2004gaussian,
  title={{G}aussian processes for machine learning},
  author={Seeger, Matthias},
  journal={International Journal of Neural Systems},
  volume={14},
  number={02},
  pages={69--106},
  year={2004},
  publisher={World Scientific}
}

@InProceedings{ritter2018scalable,
  author       = {Ritter, Hippolyt and Botev, Aleksandar and Barber, David},
  booktitle    = {International Conference on Learning Representations},
  title        = {A scalable {L}aplace approximation for neural networks},
  year         = {2018},
}

@inproceedings{li2021prefix,
  title={Prefix-tuning: Optimizing continuous prompts for generation},
  author={Li, Xiang Lisa and Liang, Percy},
  booktitle={Proceedings of the 59th Annual Meeting of the Association for Computational Linguistics and the 11th International Joint Conference on Natural Language Processing (Volume 1: Long Papers)},
  pages={4582--4597},
  year={2021}
}

@inproceedings{lester2021power,
  title={The power of scale for parameter-efficient prompt tuning},
  author={Lester, Brian and Al-Rfou, Rami and Constant, Noah},
  booktitle={Proceedings of the 2021 Conference on Empirical Methods in Natural Language Processing},
  year={2021},
  organization={Association for Computational Linguistics}
}

@InProceedings{pfeiffer2021adapterfusion,
  author    = {Pfeiffer, Jonas and Kamath, Aishwarya and R{\"u}ckl{\'e}, Andreas and Cho, Kyunghyun and Gurevych, Iryna},
  booktitle = {Conference of the European Chapter of the Association for Computational Linguistics: Main Volume},
  title     = {{AdapterFusion}: Non-destructive task composition for transfer learning},
  year      = {2021},
  pages     = {487--503},
}

@InProceedings{houlsby2019parameter,
  author       = {Houlsby, Neil and Giurgiu, Andrei and Jastrzebski, Stanislaw and Morrone, Bruna and De Laroussilhe, Quentin and Gesmundo, Andrea and Attariyan, Mona and Gelly, Sylvain},
  booktitle    = {Proceedings of the 36th International Conference on Machine Learning},
  title        = {Parameter-efficient transfer learning for {NLP}},
  year         = {2019},
  organization = {PMLR},
  pages        = {2790--2799},
  volume       = {97},
}

@InProceedings{wang2024blob,
  author    = {Wang, Yibin and Shi, Haizhou and Han, Ligong and Metaxas, Dimitris and Wang, Hao},
  booktitle = {Advances in Neural Information Processing Systems},
  title     = {{BLoB}: {Bayesian} low-rank adaptation by backpropagation for large language models},
  year      = {2024},
  pages     = {67758--67794},
  volume    = {37},
}

@InProceedings{blundell2015BBB,
  author       = {Blundell, Charles and Cornebise, Julien and Kavukcuoglu, Koray and Wierstra, Daan},
  booktitle    = {Proceedings of the 32nd International Conference on Machine Learning},
  title        = {Weight uncertainty in neural networks},
  year         = {2015},
  organization = {PMLR},
  pages        = {1613--1622},
  volume       = {37},
}

@InProceedings{gal2016dropout,
  author       = {Gal, Yarin and Ghahramani, Zoubin},
  booktitle    = {Proceedings of the 33rd International Conference on Machine Learning},
  title        = {Dropout as a {Bayesian} approximation: Representing model uncertainty in deep learning},
  year         = {2016},
  organization = {PMLR},
  pages        = {1050--1059},
  volume       = {48},
}

@InProceedings{daxberger2021laplace,
  author    = {Daxberger, Erik and Kristiadi, Agustinus and Immer, Alexander and Eschenhagen, Runa and Bauer, Matthias and Hennig, Philipp},
  booktitle = {Advances in Neural Information Processing Systems},
  title     = {{L}aplace redux---effortless {Bayesian} deep learning},
  year      = {2021},
  pages     = {20089--20103},
  volume    = {34},
}

@inproceedings{rankovic2025gollum,
  title={{GOLLuM}: {G}aussian process optimized {LLMs}---reframing {LLM} finetuning through {Bayesian} optimization},
  author={Rankovi{\'c}, Bojana and Schwaller, Philippe},
  booktitle={{ICLR} 2025 Workshop on World Models: Understanding, Modelling and Scaling},
  year={2025}
}

@InProceedings{graves2011practical,
  author    = {Graves, Alex},
  booktitle = {Advances in Neural Information Processing Systems},
  title     = {Practical variational inference for neural networks},
  year      = {2011},
  pages     = {2348--2356},
  volume    = {24},
}

@inproceedings{xue2021bayesian,
  title={{Bayesian} transformer language models for speech recognition},
  author={Xue, Boyang and Yu, Jianwei and Xu, Junhao and Liu, Shansong and Hu, Shoukang and Ye, Zi and Geng, Mengzhe and Liu, Xunying and Meng, Helen},
  booktitle={{ICASSP} 2021---2021 {IEEE} International Conference on Acoustics, Speech and Signal Processing ({ICASSP})},
  pages={7378--7382},
  year={2021},
  organization={IEEE}
}

@InProceedings{desai2020calibration,
  author    = {Desai, Shrey and Durrett, Greg},
  booktitle = {Conference on Empirical Methods in Natural Language Processing ({EMNLP})},
  title     = {Calibration of pre-trained transformers},
  year      = {2020},
  pages     = {295--302},
}

@InProceedings{izmailov2021bayesian,
  author       = {Izmailov, Pavel and Vikram, Sharad and Hoffman, Matthew D and Wilson, Andrew Gordon},
  booktitle    = {Proceedings of the 38th International Conference on Machine Learning},
  title        = {What are {Bayesian} neural network posteriors really like?},
  year         = {2021},
  organization = {PMLR},
  pages        = {4629--4640},
  volume       = {139},
}

@InProceedings{kull2019beyond,
  author    = {Kull, Meelis and Perello Nieto, Miquel and K{\"a}ngsepp, Markus and Silva Filho, Telmo and Song, Hao and Flach, Peter},
  booktitle = {Advances in Neural Information Processing Systems},
  title     = {Beyond temperature scaling: Obtaining well-calibrated multi-class probabilities with {D}irichlet calibration},
  year      = {2019},
  pages     = {12316--12326},
  volume    = {32},
}

@article{snelson2005sparse,
  title={Sparse {G}aussian processes using pseudo-inputs},
  author={Snelson, Edward and Ghahramani, Zoubin},
  journal={Advances in Neural Information Processing Systems},
  volume={18},
  year={2005}
}

@InProceedings{hendrycksmeasuring,
  author    = {Hendrycks, Dan and Burns, Collin and Basart, Steven and Zou, Andy and Mazeika, Mantas and Song, Dawn and Steinhardt, Jacob},
  booktitle = {International Conference on Learning Representations},
  title     = {Measuring massive multitask language understanding},
  year      = {2021},
}

@InProceedings{yangbayesian,
  author    = {Yang, Adam X and Robeyns, Maxime and Wang, Xi and Aitchison, Laurence},
  booktitle = {International Conference on Learning Representations},
  title     = {{Bayesian} low-rank adaptation for large language models},
  year      = {2024},
}

@InProceedings{lin2025stochastic,
  author       = {Lin, Moule and Guan, Shuhao and Jing, Weipeng and Botterweck, Goetz and Patane, Andrea},
  booktitle    = {Proceedings of the 28th International Conference on Artificial Intelligence and Statistics},
  title        = {Stochastic weight sharing for {Bayesian} neural networks},
  year         = {2025},
  organization = {PMLR},
  pages        = {4519--4527},
  volume       = {258},
}

@article{kim2025medical,
  title={Medical hallucinations in foundation models and their impact on healthcare},
  author={Kim, Yubin and Jeong, Hyewon and Chen, Shan and Li, Shuyue Stella and Park, Chanwoo and Lu, Mingyu and Alhamoud, Kumail and Mun, Jimin and Grau, Cristina and Jung, Minseok and others},
  journal={arXiv preprint arXiv:2503.05777},
  year={2025}
}

@InProceedings{yin2024lofit,
  author    = {Yin, Fangcong and Ye, Xi and Durrett, Greg},
  booktitle = {Advances in Neural Information Processing Systems},
  title     = {{LoFiT}: Localized fine-tuning on {LLM} representations},
  year      = {2024},
  pages     = {9474--9506},
  volume    = {37},
}

@inproceedings{kweon2025uncertainty,
  title={Uncertainty quantification and decomposition for {LLM}-based recommendation},
  author={Kweon, Wonbin and Jang, Sanghwan and Kang, SeongKu and Yu, Hwanjo},
  booktitle={Proceedings of the {ACM} on Web Conference 2025},
  pages={4889--4901},
  year={2025}
}

@inproceedings{lin2024data,
  title={Data-efficient fine-tuning for {LLM}-based recommendation},
  author={Lin, Xinyu and Wang, Wenjie and Li, Yongqi and Yang, Shuo and Feng, Fuli and Wei, Yinwei and Chua, Tat-Seng},
  booktitle={Proceedings of the 47th International {ACM} {SIGIR} Conference on Research and Development in Information Retrieval},
  pages={365--374},
  year={2024}
}

@article{savage2025large,
  title={Large language model uncertainty proxies: Discrimination and calibration for medical diagnosis and treatment},
  author={Savage, Thomas and Wang, John and Gallo, Robert and Boukil, Abdessalem and Patel, Vishwesh and Safavi-Naini, Seyed Amir Ahmad and Soroush, Ali and Chen, Jonathan H},
  journal={Journal of the American Medical Informatics Association},
  volume={32},
  number={1},
  pages={139--149},
  year={2025},
  publisher={Oxford University Press}
}

@article{wu2021way,
  title={This is the way: Sensors auto-calibration approach based on deep learning for self-driving cars},
  author={Wu, Shan and Hadachi, Amnir and Vivet, Damien and Prabhakar, Yadu},
  journal={{IEEE} Sensors Journal},
  volume={21},
  number={24},
  pages={27779--27788},
  year={2021},
  publisher={IEEE}
}

@article{tu2025driveditfit,
  title={{DriveDiTFit}: Fine-tuning diffusion transformers for autonomous driving data generation},
  author={Tu, Jiahang and Ji, Wei and Zhao, Hanbin and Zhang, Chao and Zimmermann, Roger and Qian, Hui},
  journal={{ACM} Transactions on Multimedia Computing, Communications and Applications},
  volume={21},
  number={3},
  pages={1--29},
  year={2025},
  publisher={{ACM} New York, NY}
}

@InProceedings{zhou2023dr,
  author    = {Zhou, Nan and Chen, Jiaxin and Huang, Di},
  booktitle = {Proceedings of the {IEEE} Conference on Computer Vision and Pattern Recognition},
  title     = {{DR-Tune}: Improving fine-tuning of pretrained visual models by distribution regularization with semantic calibration},
  year      = {2023},
  pages     = {1547--1556},
}

@InProceedings{liu2024calibration,
  author    = {Liu, Hongfu and Huang, Hengguan and Gu, Xiangming and Wang, Hao and Wang, Ye},
  booktitle = {International Conference on Learning Representations},
  title     = {On calibration of {LLM}-based guard models for reliable content moderation},
  year      = {2024},
}

@InProceedings{mai2024fine,
  author    = {Mai, Zheda and Chowdhury, Arpita and Zhang, Ping and Tu, Cheng-Hao and Chen, Hong-You and Pahuja, Vardaan and Berger-Wolf, Tanya and Gao, Song and Stewart, Charles and Su, Yu and others},
  booktitle = {Advances in Neural Information Processing Systems},
  title     = {Fine-tuning is fine, if calibrated},
  year      = {2024},
  pages     = {136084--136119},
  volume    = {37},
}

@InProceedings{rezende2015variational,
  title     = {Variational inference with normalizing flows},
  author    = {Rezende, Danilo and Mohamed, Shakir},
  booktitle = {Proceedings of the 32nd International Conference on Machine Learning},
  pages     = {1530--1538},
  year      = {2015},
  volume    = {37},
  series    = {Proceedings of Machine Learning Research},
  publisher = {PMLR},
}

@InProceedings{papamakarios2017masked,
  title     = {Masked autoregressive flow for density estimation},
  author    = {Papamakarios, George and Pavlakou, Theo and Murray, Iain},
  booktitle = {Advances in Neural Information Processing Systems},
  volume    = {30},
  year      = {2017},
}

@InProceedings{szegedy2016rethinking,
  title     = {Rethinking the {I}nception architecture for computer vision},
  author    = {Szegedy, Christian and Vanhoucke, Vincent and Ioffe, Sergey and Shlens, Jon and Wojna, Zbigniew},
  booktitle = {Proceedings of the {IEEE} Conference on Computer Vision and Pattern Recognition},
  pages     = {2818--2826},
  year      = {2016},
}
\bibliographystyle{icml2026}

%%%%%%%%%%%%%%%%%%%%%%%%%%%%%%%%%%%%%%%%%%%%%%%%%%%%%%%%%%%%%%%%%%%%%%%%%%%%%%%
%%%%%%%%%%%%%%%%%%%%%%%%%%%%%%%%%%%%%%%%%%%%%%%%%%%%%%%%%%%%%%%%%%%%%%%%%%%%%%%
% APPENDIX
%%%%%%%%%%%%%%%%%%%%%%%%%%%%%%%%%%%%%%%%%%%%%%%%%%%%%%%%%%%%%%%%%%%%%%%%%%%%%%%
%%%%%%%%%%%%%%%%%%%%%%%%%%%%%%%%%%%%%%%%%%%%%%%%%%%%%%%%%%%%%%%%%%%%%%%%%%%%%%%

\appendix

\section{Derivations for the Variational Sparse Inducing Weight Model}
\label{app:derivations}

\subsection{Model Specification and Joint Density}
We define $W \in \mathbb{R}^{d_{\text{out}}\times d_{\text{in}}}$ as the layer weight matrix and
$U \in \mathbb{R}^{r\times c}$ the low-dimensional inducing matrix with $r \ll d_{\text{out}}$ and $c \ll d_{\text{in}}$.
We place a Gaussian (equivalently, matrix-normal) prior on $U$:
\begin{equation}
p(U) \;=\; \mathcal{N}\!\big(\mathrm{vec}(U)\mid \mathbf{0},\,K_U\big),
\;
K_U \;=\; K_c \otimes K_r
\label{eq:app-prior}
\end{equation}
which is equivalent to $U \sim \mathcal{MN}(\mathbf{0},K_r,K_c)$.
Given $U$, the prior conditional distribution of $W$ is Gaussian
\begin{equation}
p(W\mid U) \;=\; \mathcal{N}\!\big(W \mid M_W(U),\, \Sigma_W\big)
\label{eq:app-condW}
\end{equation}
where $M_W(U)$ is linear in $U$. The variational conditional shares the same mean but has a scaled covariance,
\begin{equation}
q(W\mid U) \;=\; \mathcal{N}\!\big(W \mid M_W(U),\, \lambda^2 \Sigma_W\big),
\label{eq:app-condWq}
\end{equation}
where $\lambda>0$ is a learnable scale parameter. The row and column covariance factors are parameterized as
\begin{equation}
K_r \;=\; Z_r Z_r^\top + D_r^2,
\qquad
K_c \;=\; Z_c Z_c^\top + D_c^2
\label{eq:app-KrKc}
\end{equation}
The likelihood factorizes over data $\mathcal{D}=\{(x_n,y_n)\}_{n=1}^N$ as $p(\mathcal{D}\mid W)=\prod_{n=1}^N p(y_n\mid f(x_n;W))$.
The joint density is:
\begin{equation}
p(U,W,\mathcal{D}) \;=\; p(U)\,p(W\mid U)\,p(\mathcal{D}\mid W)
\label{eq:app-joint}
\end{equation}

\subsection{Variational Family and Factorization}

We posit a Gaussian variational posterior on $U$,
\begin{equation}
q(U) \;=\; \mathcal{N}\!\big(\mathrm{vec}(U)\mid \mathbf{m},\,\mathbf{S}\big)
\label{eq:app-qU}
\end{equation}
As a starting point, we first adopt the classical SGP choice of keeping the model conditional $p(W\mid U)$ inside the variational family:
$$q(U,W) = q(U)\,p(W\mid U)$$
We later relax this to the scaled variational conditional $q(W\mid U)=\mathcal{N}(M_W(U),\lambda^2\Sigma_W)$ of Eq.~\ref{eq:app-condWq}, which reinstates the conditional KL term (see Eq.\ref{eq:app-elbo-2} and the discussion thereafter).
Marginalizing $U$ gives the variational distribution over $W$:
\begin{equation}
q(W) \;=\; \int p(W\mid U)\,q(U)\,\mathrm{d}U
\label{eq:app-qW}
\end{equation}

\subsection{ELBO Derivation}

Starting from $\log p(\mathcal{D})=\log\!\int p(U,W,\mathcal{D})\,\mathrm{d}U\,\mathrm{d}W$ and inserting $q(U,W)$ \citep{ritter2021sparse}:
\begin{equation}
\begin{split}
&\log p(\mathcal{D})
=
\\
&\log \int q(U,W)\,\frac{p(U)\,p(W\mid U)\,p(\mathcal{D}\mid W)}{q(U)\,p(W\mid U)}\,\mathrm{d}U\,\mathrm{d}W
\label{eq:app-logZ-1}
\end{split}
\end{equation}
Jensen's inequality yields the ELBO
\begin{equation}
    \begin{split}
\mathcal{L}
=
\mathbb{E}_{q(U) p(W\mid U)}\!\Big[\log p(\mathcal{D}\mid W)\Big]- 
\\ \mathbb{E}_{q(U)}\!\Big[\log \tfrac{q(U)}{p(U)}\Big]
\label{eq:app-elbo-1}
    \end{split}
\end{equation}
i.e.
\begin{equation}
\mathcal{L}
\;=\;
\mathbb{E}_{q(W)}\!\Big[\log p(\mathcal{D}\mid W)\Big]
- \mathrm{KL}\!\big(q(U)\,\|\,p(U)\big)
\label{eq:app-elbo-2}
\end{equation}
The form in Eq.~\eqref{eq:app-elbo-2} corresponds to the classical SGP setting in which $q(W\mid U) = p(W\mid U)$, so the conditional contribution vanishes. In our case the variational and prior conditionals share the same mean but differ by the scale $\lambda$, i.e.\ $q(W\mid U)=\mathcal{N}(M_W(U),\lambda^2\Sigma_W)$ versus $p(W\mid U)=\mathcal{N}(M_W(U),\Sigma_W)$, so this term no longer cancels and must be retained. To make it explicit we use the KL chain rule for the joint distribution $q(W,U) = q(U)\,q(W\mid U)$:
\begin{equation}
\begin{split}
&\mathrm{KL}\!\big(q(W,U)\,\|\,p(W,U)\big)
\;=\; \mathrm{KL}\!\big(q(U)\,\|\,p(U)\big) \\
&+ \mathbb{E}_{q(U)}\!\Big[\mathrm{KL}\!\big(q(W\mid U)\,\|\,p(W\mid U)\big)\Big]
\end{split}
\label{eq:app-kl-chain}
\end{equation}
Substituting this decomposition into Eq.~\eqref{eq:app-elbo-2}, i.e.\ adding and subtracting $\mathbb{E}_{q(U)}[\log p(W\mid U)]$ inside the expectation, separates the joint KL into a marginal term over $U$ and a conditional term over $W\mid U$:
\begin{equation}
\begin{split}
\mathcal{L}
\;&=\;
\mathbb{E}_{q(W)}\!\Big[\log p(\mathcal{D}\mid W)\Big]
- \mathrm{KL}\!\big(q(U)\,\|\,p(U)\big)
\\
&- \mathbb{E}_{q(U)}\!\Big[\mathrm{KL}\!\big(q(W\mid U)\,\|\,p(W\mid U)\big)\Big]
\end{split}
\label{eq:app-elbo-3}
\end{equation}
which matches the presentation in the main text. The conditional KL in the last line is closed-form and independent of $U$, as shown in Eq.~\eqref{eq:app-klW}.

\subsection{Closed-Form Terms}

\paragraph{KL between Gaussians for $q(U)$ and $p(U)$.}
Define $d_U=rc$ and denote $\mathbf{m}\in\mathbb{R}^{d_U}$, $\mathbf{S}\in\mathbb{R}^{d_U\times d_U}$, and $K_U=K_c\otimes K_r$.
Then
\begin{equation}
\begin{split}
\mathrm{KL}\!\big(q(U)\,\|\,p(U)\big)
=
\frac{1}{2}\Big(
\mathrm{tr}(K_U^{-1}\mathbf{S})
+
\\
\mathbf{m}^\top K_U^{-1}\mathbf{m}
- d_U
+ \log \tfrac{|K_U|}{|\mathbf{S}|}
\Big)
\end{split}
\label{eq:app-klU}
\end{equation}
Using $\log|\mathbf{K}_c\otimes \mathbf{K}_r|
= c\,\log|\mathbf{K}_r| + r\,\log|\mathbf{K}_c|$ simplifies evaluation.
In the whitened case with $p(U)=\mathcal{N}(\mathbf{0},\mathbf{I}_{d_U})$,
\begin{equation}
\begin{split}
\mathrm{KL}\!\big(q(U)\,\|\,p(U)\big)
= \\
\frac{1}{2}\Big(
\mathrm{tr}(\mathbf{S})
+
\mathbf{m}^\top \mathbf{m}
- d_U
- \log |\mathbf{S}|
\Big)
\end{split}
\label{eq:app-klU-whitened}
\end{equation}

\paragraph{Conditional KL for $q(W\mid U)$ vs. $p(W\mid U)$.}
Both share the same mean $M_W(U)$. The variational conditional has covariance $\Sigma_q=\lambda^2 \Sigma_W$ (from Eq.~\eqref{eq:app-condWq}), and the prior conditional has covariance $\Sigma_p=\Sigma_W$ (from Eq.~\eqref{eq:app-condW}).
Set $d_W=d_{\text{out}} d_{\text{in}}$.
Then
\begin{equation}
\begin{split}
&\mathrm{KL}\!\big(q(W\mid U)\,\|\,p(W\mid U)\big)
 \\
&=\frac{1}{2}\Big(
\mathrm{tr}(\Sigma_p^{-1}\Sigma_q)
- d_W
+ \log \tfrac{|\Sigma_p|}{|\Sigma_q|}
\Big)
\\
&=
\frac{1}{2}\Big(\lambda^2 d_W - d_W - 2d_W\log\lambda\Big)
\\
&=
\frac{d_W}{2}\big(\lambda^2 - 1 - 2\log\lambda\big)
\end{split}
\label{eq:app-klW}
\end{equation}
This is independent of $U$ and thus the expectation $\mathbb{E}_{q(U)}[\cdot]$ in \eqref{eq:app-elbo-3} is trivial.

\subsection{Form of $q(W)$: Mean and Covariance}
\label{close_form}
Vectorizing $W$ and using $\mathrm{vec}(T_r U T_c)=(T_c^\top\otimes T_r)\,\mathrm{vec}(U)$, one obtains
\begin{equation}
    \begin{split}
\mathbb{E}_{q}[\,\mathrm{vec}(W)\,]
\;&=\;
(T_c^\top \otimes T_r)\,\mathbf{m},
\; \\
\mathbb{E}_{q}[\,W\,]
\;&=\;
T_r\,M\,T_c
    \end{split}
\label{eq:app-qW-mean}
\end{equation}
where $M$ is the matricized version of $\mathbf{m}$.
Moreover, since $\Sigma_q(W\mid U)=\lambda^2 \Sigma_W$ is independent of $U$,
\begin{equation}
\mathrm{Cov}_{q}[\,\mathrm{vec}(W)\,]
=
\lambda^2 \Sigma_W \;+\; (T_c^\top\otimes T_r)\,\mathbf{S}\,(T_c\otimes T_r^\top)
\label{eq:app-qW-cov}
\end{equation}
Thus $q(W)$ is Gaussian with the above mean and covariance whenever $q(U)$ is Gaussian and $M_W(U)$ is linear in $U$.

Finally,
\begin{equation}
\log p(\mathcal{D})
\;=\;
\mathcal{L}
\;+\;
\mathrm{KL}\!\big(q(U)\,\|\,p(U\mid \mathcal{D})\big),
\label{eq:app-tightness}
\end{equation}
So maximizing $\mathcal{L}$ minimizes the divergence from the variational posterior to the exact posterior over $U$.

% --- Preamble (if not already present) ---
% \usepackage{amsthm,amsmath,amssymb}
% \newtheorem{proposition}{Proposition}

\begin{proposition}[Details of $U$-space-independent KL]\label{prop:u-space-kl}
\label{proposition_appendix}
Let $U\in\mathbb{R}^{d_U}$ denote the inducing variables (with $d_U = rc$) with a Gaussian prior
$p(U)=\mathcal{N}(\mu_p,\Sigma_p)$ and variational posterior
$q_\psi(U)=\mathcal{N}(\mu_q,\Sigma_q)$.
Let $T_\phi:\mathbb{R}^{d_U}\to\mathbb{R}^{d_U}$ be an invertible $C^1$ map (the adapter flow)
with Jacobian $J_{T_\phi}(U)$, and define the LoRA parameters as the deterministic
pushforward $\Delta W = T_\phi(U)$.
For data $\mathcal{D}$ with likelihood $p(\mathcal{D}\mid \Delta W)$, the ELBO in $U$-space
\begin{equation}\label{eq:elbo-u}
\begin{split}
\mathcal{L}(\phi,\psi)
&=\mathbb{E}_{q_\psi(U)}\!\big[\log p(\mathcal{D}\mid T_\phi(U))\big]
\\
&- \mathrm{KL}\!\big(q_\psi(U)\,\|\,p(U)\big)
\end{split}
\end{equation}
is equivalent to the ELBO in $\Delta W$-space,
\begin{equation}\label{eq:elbo-w}
\begin{split}
\mathcal{L}(\phi,\psi)
=\mathbb{E}_{q_\phi(\Delta W)}\!\big[\log p(\mathcal{D}\mid \Delta W)\big]
\\
- \mathrm{KL}\!\big(q_\phi(\Delta W)\,\|\,p_\phi(\Delta W)\big)
\end{split}
\end{equation}
where $q_\phi(\Delta W):=T_{\phi\#}q_\psi$ and $p_\phi(\Delta W):=T_{\phi\#}p$ are pushforwards under $T_\phi$.
Moreover, the KL term is invariant under $T_\phi$:
\begin{equation}\label{eq:kl-invariance}
\mathrm{KL}\!\big(q_\phi(\Delta W)\,\|\,p_\phi(\Delta W)\big)
=\mathrm{KL}\!\big(q_\psi(U)\,\|\,p(U)\big)
\end{equation}
In particular, when $q_\psi$ and $p$ are Gaussian, the KL admits a closed form:
\begin{equation}
\begin{split}
\mathrm{KL}\!\big(\mathcal{N}(\mu_q,\Sigma_q)\,\|\,\mathcal{N}(\mu_p,\Sigma_p)\big)
= \frac{1}{2}\bigg(
\operatorname{tr}(\Sigma_p^{-1}\Sigma_q) \\
\quad + (\mu_p-\mu_q)^{\!\top}\Sigma_p^{-1}(\mu_p-\mu_q) \\
\quad - d_U +
\log\frac{\det\Sigma_p}{\det\Sigma_q}
\bigg)
\end{split}
\label{eq:gauss-kl}
\end{equation}
\end{proposition}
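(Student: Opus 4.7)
The plan is to establish the KL invariance \eqref{eq:kl-invariance} first and then read off the ELBO equivalence as an immediate consequence. The central observation is that the Jacobian factors introduced by the pushforward densities appear identically in $q_\phi$ and $p_\phi$, so they cancel inside the log-ratio and the KL functional is blind to the reparameterization $T_\phi$.

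First I would write the pushforward densities explicitly. Since $T_\phi$ is an invertible $C^1$ map on $\mathbb{R}^{d_U}$, the change-of-variables formula yields
\begin{equation*}
q_\phi(\Delta W) = q_\psi\!\big(T_\phi^{-1}(\Delta W)\big)\,\big|\det J_{T_\phi^{-1}}(\Delta W)\big|,
\end{equation*}
and the analogous expression for $p_\phi$. Taking the ratio, the absolute Jacobians cancel pointwise, so
\begin{equation*}
\log\frac{q_\phi(\Delta W)}{p_\phi(\Delta W)} = \log\frac{q_\psi\!\big(T_\phi^{-1}(\Delta W)\big)}{p\!\big(T_\phi^{-1}(\Delta W)\big)}.
\end{equation*}
Substituting $U = T_\phi^{-1}(\Delta W)$ in the KL integral, the Jacobian appearing from $d\Delta W = |\det J_{T_\phi}(U)|\,dU$ combines with $q_\phi(\Delta W)$ to recover $q_\psi(U)\,dU$, giving $\mathrm{KL}(q_\phi\,\|\,p_\phi)=\mathrm{KL}(q_\psi\,\|\,p)$, which is precisely \eqref{eq:kl-invariance}.

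For the expected log-likelihood I would invoke the same substitution: because $\Delta W = T_\phi(U)$ is a deterministic transport, the law of the unconscious statistician gives $\mathbb{E}_{q_\phi(\Delta W)}[\log p(\mathcal{D}\mid\Delta W)] = \mathbb{E}_{q_\psi(U)}[\log p(\mathcal{D}\mid T_\phi(U))]$. Combining this with the KL invariance shows that \eqref{eq:elbo-u} and \eqref{eq:elbo-w} agree term by term. The closed-form Gaussian identity \eqref{eq:gauss-kl} is then a textbook computation obtained by direct integration of $\log(q_\psi/p)$ against $q_\psi$ using $\mathbb{E}_{q_\psi}[(U-\mu_q)(U-\mu_q)^\top]=\Sigma_q$; I would simply quote it.

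The main obstacle is not algebraic but measure-theoretic: for the pushforward densities to exist and for the substitution $\Delta W \leftrightarrow U$ to be valid almost everywhere, $T_\phi$ must be a bijection with nonvanishing Jacobian determinant. For the row-wise MAF used in Section~\ref{sec:method} this holds by construction, since autoregressive affine flows have triangular Jacobians with strictly positive diagonal entries. I would therefore state this regularity prerequisite explicitly at the start of the proof so that the change of variables may be invoked without qualification, and so that the closed-form Gaussian KL \eqref{eq:gauss-kl} applies unchanged in the $U$-space where all integrals are finite.
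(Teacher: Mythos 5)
Your proof follows the same route as the paper's: write the pushforward densities via change of variables, observe that the Jacobian factors cancel in the log-ratio so the KL integral reduces to the $U$-space KL, handle the likelihood term by the deterministic-transport substitution, and quote the Gaussian closed form. The added remark about bijectivity and nonvanishing Jacobian of the MAF is a reasonable explicit restatement of the paper's ``invertible $C^1$'' hypothesis but does not change the argument.
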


\begin{proof}
Since $\Delta W=T_\phi(U)$ is a deterministic, invertible change of variables,
the data term satisfies
$\mathbb{E}_{q_\psi(U)}[\log p(\mathcal{D}\mid T_\phi(U))]
=\mathbb{E}_{q_\phi(\Delta W)}[\log p(\mathcal{D}\mid \Delta W)]$.
For the KL term, write the pushforward densities via change of variables:
$q_\phi(\Delta W)=q_\psi(U)\,|\det J_{T_\phi}(U)|^{-1}$ and
$p_\phi(\Delta W)=p(U)\,|\det J_{T_\phi}(U)|^{-1}$ with $\Delta W=T_\phi(U)$.
Then
\begin{equation}
\begin{split}
\mathrm{KL}(q_\phi\|p_\phi)
=\!\int q_\phi(\Delta W)\log\frac{q_\phi(\Delta W)}{p_\phi(\Delta W)}\,\mathrm{d}\Delta W
\\
=\!\int q_\psi(U)\log\frac{q_\psi(U)}{p(U)}\,\mathrm{d}U
=\mathrm{KL}(q_\psi\|p)
\end{split}
\end{equation}
where the Jacobian determinants cancel exactly.
Combining the two parts establishes the equivalent ELBO forms
\eqref{eq:elbo-u}--\eqref{eq:elbo-w} and the invariance
\eqref{eq:kl-invariance}; the KL does not depend on $\Delta W$ nor on $T_\phi$.
When $q_\psi$ and $p$ are Gaussian, \eqref{eq:gauss-kl} follows from the
standard closed-form KL between multivariate Gaussians.
\end{proof}

\subsection{Interpretation of ELBO Terms}
\label{app:elbo-terms}

The ELBO in Eq.~\eqref{eq:elbonew} consists of three terms with the following interpretations:

\begin{enumerate}
    \item \textbf{Expected log-likelihood.} The first term $\mathbb{E}_{U_0\sim q_0,\;\varepsilon}[\log p(\mathcal{D}\mid W)]$ represents the expected data fit. This is approximated by Monte Carlo sampling: we draw $N$ inducing samples from $q_0$, map each through the conditional mean $M_W(U)$ and add scaled Gaussian noise (controlled by $\lambda$) to obtain weight realizations, then average the log-likelihoods over these samples.
    
    \item \textbf{KL over inducing variables.} The second term 
    \begin{equation}
    \begin{split}
    \mathbb{E}_{U_0\sim q_0}\!\Big[ \log q_0(U_0) - \log\!\big|\det J_{T_\phi}(U_0)\big| \\ - \log p \big(T_\phi(U_0)\big) \Big]
        \end{split}
    \end{equation}
    computes $\mathrm{KL}(q_\phi(U)\,\|\,p(U))$ via the flow density (using the change-of-variables formula). By Proposition~\ref{prop:u-space-kl-main}, this KL is invariant under the transformation $T_\phi$, meaning it equals the KL in $\Delta W$-space. Therefore, we may optimize in $U$-space while inducing a consistent posterior over weights.
    
    \item \textbf{Conditional KL (closed-form).} The third term $\frac{D}{2}(\lambda^2 - 1 - 2\log\lambda)$ represents $\mathrm{KL}(q(W\mid U)\,\|\,p(W\mid U))$. Since $q(W\mid U)$ and $p(W\mid U)$ share the same conditional mean and differ only in their covariances ($\lambda^2\Sigma_W$ vs.\ $\Sigma_W$), the conditional KL reduces to this closed form, which is \emph{independent of $U$} and requires no sampling or Hessian computation. The derivation is provided in Appendix~\ref{close_form}.
\end{enumerate}
\section{Analysis of Out-of-Distribution Robustness}
\label{app:ood-analysis}

Table~\ref{table:ood_train} evaluates robustness under distribution shift, covering small shifts (ARC-C, ARC-E) and large shifts (CS, Eng, Law, Health from MMLU). We highlight three key findings:

\paragraph{Accuracy under shift.}
\ours{} achieves the best OoD accuracy on 5 of 6 shifted datasets (ARC-C, ARC-E, CS, Law, Health) and the best ID accuracy (OBQA). The largest OoD accuracy gain is $+1.5$ points on Health over the next-best method (LA). This suggests that the end-to-end Bayesian training acts as a form of distributional regularization that prevents overfitting to the in-distribution data.

\paragraph{Calibration under shift.}
While LA achieves the best ECE in-distribution (ID) and under small shifts (ARC-C/E), \ours{} achieves the best ECE on three of four large-shift domains: CS ($11.1$), Law ($16.5$), and Health ($12.9$). However, LA retains an advantage on Engineering ($12.8$ vs.\ $20.4$), indicating that the relative benefit of end-to-end Bayesian training varies across OoD domains. This is a critical distinction: post-hoc methods calibrate well on data similar to the training set but degrade under severe distribution shift, whereas end-to-end Bayesian training produces more robust uncertainty estimates.

\paragraph{Likelihood under shift.}
\ours{} achieves the best NLL on 6 of 7 evaluation sets (all except ARC-E OoD). The NLL improvements under large shifts are substantial: e.g., $1.20$ vs.\ $1.29$ (BBB) on Health, $1.42$ vs.\ $1.49$ (LLLA) on Law. Combined with the efficiency results (Table~\ref{tab:efficiency}), \ours{} achieves OoD performance comparable to or better than computationally heavy ensembles at a fraction of the cost.

\section{Effect of Monte Carlo Samples on Accuracy and Uncertainty}
\label{app:mc-samples}

{
In this section, we analyze how the number of Monte Carlo samples $N$ affects predictive accuracy and uncertainty for the in-distribution (ID) OBQA dataset and the out-of-distribution (OoD) ARC dataset at a fixed checkpoint, consistent with the OoD protocol of Table~\ref{table:ood_train} (trained on OBQA; ARC is a shifted evaluation set). For each $N$, we report accuracy (ACC), negative log-likelihood (NLL), expected calibration error (ECE; 15 bins), and the average per-batch inference time.
}

\begin{table}[!tbh]
    \centering
    \small
    \caption{Effect of the number of Monte Carlo samples $N$ on OoD performance on ARC at the same checkpoint. Time denotes average per-batch inference time in seconds.}
    \label{tab:mc-ood}
    \begin{tabular}{ccccc}
        \toprule
        $N$ & NLL $\downarrow$ & ACC $\uparrow$ & ECE $\downarrow$ & Time (s) $\downarrow$ \\
        \midrule
         1 & 0.4245 & 86.97 & 6.04 & 0.7610 \\
         2 & 0.4245 & 86.62 & 6.31 & 1.5253 \\
         3 & 0.4253 & 86.62 & 6.44 & 2.2888 \\
         4 & 0.4252 & 86.62 & 6.05 & 3.0507 \\
         5 & 0.4252 & 86.80 & 5.84 & 3.8133 \\
         6 & 0.4248 & 86.80 & 5.72 & 4.5761 \\
         7 & 0.4246 & 86.62 & 6.06 & 5.3384 \\
         8 & 0.4246 & 86.80 & 5.88 & 6.1010 \\
         9 & 0.4249 & 86.80 & 5.72 & 6.8632 \\
        10 & 0.4246 & 86.80 & 5.99 & 7.6256 \\
        \bottomrule
    \end{tabular}
\end{table}

\begin{table}[!thb]
    \centering
    \small
    \caption{Effect of the number of Monte Carlo samples $N$ on ID performance on OBQA at a fixed checkpoint. Time denotes average per-batch inference time in seconds.}
    \label{tab:mc-id}
    \begin{tabular}{ccccc}
        \toprule
        $N$ & NLL $\downarrow$ & ACC $\uparrow$ & ECE $\downarrow$ & Time (s) $\downarrow$ \\
        \midrule
         1 & 1.0756 & 63.33 & 15.55 &  2.3997 \\
         2 & 1.0723 & 63.54 & 14.76 &  4.7858 \\
         3 & 1.0716 & 63.33 & 15.16 &  7.1729 \\
         4 & 1.0707 & 63.54 & 14.80 &  9.5596 \\
         5 & 1.0708 & 63.13 & 15.21 & 11.9460 \\
         6 & 1.0704 & 63.54 & 14.88 & 14.3300 \\
         7 & 1.0699 & 63.33 & 15.03 & 16.7140 \\
         8 & 1.0697 & 63.54 & 14.82 & 19.0961 \\
         9 & 1.0697 & 63.54 & 15.09 & 21.4805 \\
        10 & 1.0702 & 63.54 & 14.91 & 23.8661 \\
        \bottomrule
    \end{tabular}
\end{table}

As shown in Tables~\ref{tab:mc-ood} and~\ref{tab:mc-id}, inference time grows approximately linearly with $N$. On the OoD data (ARC), ACC, NLL, and ECE stabilize once $N \geq 2$, with only minor fluctuations beyond that point. On the ID data (OBQA), increasing $N$ yields slightly lower NLL and ECE, consistent with reduced Monte Carlo noise in the predictive distribution. However, the gains beyond $\text{N}=2$--$4$ are marginal compared to the additional latency, supporting our practical recommendation of $\text{N}=2$--$4$ as a good trade-off between uncertainty quality and computational cost.

\section{Datasets and Preprocessing}
\label{app:data}

This appendix details the six benchmarks used in our evaluation, together with the scoring rules, prompting templates, and implementation choices shared across datasets.

\subsection{Task Overview}
All tasks are cast as \emph{closed-set prediction} to avoid generation artifacts. For multiple-choice datasets (ARC-C/E, OBQA) and cloze-style coreference (WinoGrande S/M), we compute option probabilities without free decoding; for BoolQ, we map to a binary verbalizer. Table~\ref{tab:app-task-overview} summarizes formats.

\begin{table*}[h]
\centering
\renewcommand{\arraystretch}{1.3}
\small
\caption{Task formats and primary metrics.}
\label{tab:app-task-overview}
\begin{tabular}{lccc}
\toprule
Dataset & Task type & Candidates & Primary metric(s) \\
\midrule
WinoGrande-S/M (WG-S/M) & Cloze coreference (2-way) & 2 & ACC, ECE, NLL\\
ARC-Challenge (ARC-C) & Multiple-choice science QA & 3--5 (mostly 4) & ACC, ECE, NLL  \\
ARC-Easy (ARC-E) & Multiple-choice science QA & 3--5 (mostly 4) & ACC, ECE, NLL  \\
OpenBookQA (OBQA) & Multiple-choice science QA & 4 & ACC, ECE, NLL  \\
BoolQ & Yes/No reading comprehension & 2 & ACC, ECE, NLL \\
\bottomrule
\end{tabular}
\end{table*}

\paragraph{Common evaluation protocol.}
Unless otherwise noted: (i) we use the official training/validation/test splits; (ii) Accuracy (ACC~$\uparrow$), Expected Calibration Error (ECE~$\downarrow$, 15 bins on $[0,1]$ using the model's predicted probability for the chosen label), and Negative Log-Likelihood (NLL~$\downarrow$) are reported; (iii) probabilities are computed from normalized option log-likelihoods as detailed below; (iv) casing and punctuation are preserved.

\paragraph{Tokenizer and limits.}
We use the backbone's SentencePiece tokenizer. Inputs are truncated to a maximum of $L_{\max}$ tokens (default 1024) by trimming long contexts first while keeping all answer options intact. Padding is on the right; the BOS/EOS usage follows the backbone defaults.

\paragraph{Scoring rule for multiple-choice/cloze.}
Given a prefix prompt $x$ and an option string $y_j$ tokenized as $(y_{j,1},\dots,y_{j,T_j})$, we score
\[
s_j \;=\; \frac{1}{T_j}\sum_{t=1}^{T_j}\log p_\theta\!\left(y_{j,t}\mid x, y_{j,<t}\right)
\]
i.e., \emph{length-normalized} log-likelihood to mitigate option-length bias. Predicted label $\hat{y}=\arg\max_j s_j$; option probabilities are $\pi_j \propto \exp(s_j)$ and are used for ECE/NLL. NLL is $-\log \pi_{y^\star}$ for gold label $y^\star$. In the case of ties, we choose the option with the larger unnormalized (sum) log-likelihood, then lexicographically.

\subsection{Prompting Templates}

To minimize prompt sensitivity, we use deterministic templates without instructions or chain-of-thought prompting. Placeholders are written as \texttt{<...>}. 

\paragraph{WinoGrande (S/M).}
Each instance contains a sentence with a blank and two candidate fillers.
\begin{quote}\small
\texttt{Sentence: <sentence>}\\
\texttt{Option A: <sentence with option1>}\\
\texttt{Option B: <sentence with option2>}
\end{quote}
We compare the completion likelihoods as in the scoring rule.

\paragraph{ARC-C / ARC-E.}
\begin{quote}\small
\texttt{Question: <question>}\\
\texttt{Options:}\\
\quad \texttt{A. <choice A>}\\
\quad \texttt{B. <choice B>}\\
\quad \texttt{C. <choice C>}\\
\quad \texttt{D. <choice D> [E. <choice E> if present]}\\
\texttt{Answer:}
\end{quote}
Each option is scored by appending its text after \texttt{Answer:}.

\paragraph{OpenBookQA (OBQA).}
Same as ARC; four options (A--D). We omit the provided ``facts'' in the main results for parity.

\paragraph{BoolQ.}
Binary QA with verbalizers \texttt{Yes}/\texttt{No}.
\begin{quote}\small
\texttt{Passage: <passage>}\\
\texttt{Question: <question>}\\
\texttt{Answer:}
\end{quote}
We score \texttt{Yes} and \texttt{No} as the two candidates.

\subsection{Preprocessing and Normalization}
\begin{itemize}
\item \textbf{Unicode/whitespace.} Normalize Unicode quotes/dashes; strip leading/trailing whitespace; collapse repeated spaces inside fields without altering semantics.
\item \textbf{Deduplication.} Remove exact duplicate training examples (rare in these corpora); evaluation splits remain untouched.
\item \textbf{Invalid items.} For ARC items with empty or malformed options, we drop the instance \emph{from training only} and keep evaluation intact; such cases are logged ($< 0.1$\% in our runs).
\item \textbf{Context truncation.} When sequences exceed $L_{\max}$, we retain the full question and all options and truncate supporting passages from the left (BoolQ) or auxiliary fields (if used), preserving the end of the passage, which often contains the answer evidence.
\item \textbf{Label integrity.} All label letters (A/B/…) are taken from the official annotations; we do not remap or reorder options.
\end{itemize}

\subsection{Dataset-specific Notes}
\paragraph{WinoGrande S/M.}
We use the official S and M partitions. Following common practice, we evaluate on the validation set for early stopping and report test numbers using the official held-out split when available. No external coreference resources are used.

\paragraph{ARC-Challenge / ARC-Easy.}
We use the AI2 ARC v1 format. Some questions have three or five options; our scoring rule handles variable cardinality. We do not incorporate retrieval or external knowledge in the main comparison.

\paragraph{OpenBookQA.}
We use the main OBQA v1.1 multiple-choice set. The ``open book'' facts are omitted in the main results for parity; adding them can increase accuracy, but does not change the calibration trends observed.

\paragraph{BoolQ.}
We keep case and punctuation in passages. Extremely long passages are truncated from the left to respect $L_{\max}$ while keeping the full question. Verbalizers are fixed as \texttt{Yes}/\texttt{No}; alternative synonyms (e.g., \texttt{True}/\texttt{False}) yield similar results.

\section{Metrics for Uncertainty Quantification}
\label{app:metrics}
NLL and ECE are two common metrics for quantifying model uncertainty. We note that ECE with a fixed number of bins provides a coarse summary of calibration. We use 15-bin ECE throughout for consistency with prior work \citep{yangbayesian,guo2017calibration}.
\paragraph{Expected Calibration Error (ECE).}
We compute ECE over $B=15$ equal-width bins on $[0,1]$ for the predicted confidences $c_i$.
Let $I_b=((b-1)/B,\, b/B]$ and $B_b=\{\,i:\, c_i\in I_b\,\}$. Then:
\begin{equation}\label{eq:ece}
\mathrm{ECE} \;=\; \sum_{b=1}^{B} \frac{|B_b|}{N}\,\bigl|\mathrm{acc}(B_b)-\mathrm{conf}(B_b)\bigr|
\end{equation}
\begin{equation}\label{eq:acc}
\mathrm{acc}(B_b) \;=\; \frac{1}{|B_b|}\sum_{i\in B_b} \mathbf{1}\{\hat y_i = y_i\}
\end{equation}
\begin{equation}\label{eq:conf}
\mathrm{conf}(B_b) \;=\; \frac{1}{|B_b|}\sum_{i\in B_b} c_i, 
\; c_i=\max_k p_\theta(y{=}k\mid x_i)
\end{equation}

\paragraph{Negative Log-Likelihood (NLL).}
For instance $i$ with gold label $y_i$ and predicted distribution $\hat{\mathbf{p}}_i$,
\begin{equation}
 \mathrm{NLL}=-\frac{1}{N}\sum_i \log \hat{p}_{i,y_i}
\end{equation}
where $\hat{p}_{i,y_i}$ is the probability the model assigns to the gold label $y_i$. For datasets with variable option counts, $\hat{\mathbf{p}}_i$ is always the softmax of length-normalized scores across the \emph{present} options.

\section{Hyperparameters}
\label{appendix:hyper}

Hyperparameters are crucial for reproducibility; therefore, we list all hyperparameters used in \ours{}. See Table~\ref{tab:hyperparams}. We also expose these hyperparameters as configurable options to support reproducibility studies.
\begin{table}[!tbh]
\centering
\caption{Hyperparameters used in our experiments.}
\label{tab:hyperparams}
\renewcommand{\arraystretch}{1.4}
\resizebox{\linewidth}{!}{%
\begin{tabular}{P{2.7cm} P{6cm}}
\toprule
\textbf{Hyperparameter} & \textbf{Value} \\
\midrule
Optimizer       & AdamW \\
Learning rate   & $5 \times 10^{-4}$ \\
Betas           & $(0.9, \, 0.999)$ \\
Epsilon ($\epsilon$) & $1 \times 10^{-5}$ \\
Weight decay    & $0.1$ \\
Scheduler       & MultiStepLR (milestones = [4, 6], $\gamma=0.1$) \\
Epochs          & 10 \\
Batch size (train) & 16 (WG-S/M, BoolQ), 8 (ARC-C/E, OBQA) \\
Batch size (eval)  & 32 (all datasets) \\
MC samples ($N$) & 4 (commonsense, OoD, and math); 2 (WikiText-2); configurable in cfg.eval \\
KL scaling      & $0.2 / \text{steps\_per\_epoch}$ \\
Evaluation frequency & Every 2 epochs \\
\midrule
GPU             & NVIDIA H100 \\
\bottomrule
\end{tabular}
}% resizebox
\end{table}

The core settings of the Sparse Gaussian Process are listed in Table~\ref{tab:inducing-hyperparams}. Inducing rows and columns are set to 9, matching the LoRA rank for a comparable parameter count.
The attention Q and K matrices, together with the language model head weight matrices, are replaced by the Bayesian framework (SGP).
\begin{table}[ht]
\centering
\caption{Inducing hyperparameters.}
\label{tab:inducing-hyperparams}
\renewcommand{\arraystretch}{1.4}
\resizebox{\linewidth}{!}{%
\begin{tabular}{P{2.7cm} P{6cm}}
\toprule
\textbf{Hyperparameter} & \textbf{Value} \\
\midrule
inducing\_rows        & 9 \\
inducing\_cols        & 9 \\
whitened\_u           & True \\
q\_inducing           & diagonal \\
learn\_lambda          & True \\
init\_lambda          & 0.001 \\
max\_lambda            & 0.03 \\
max\_sd\_u            & 0.1 \\
cache\_cholesky       & True \\
prior\_sd             & 0.1 \\
sqrt\_width\_scaling  & True \\
key\_layers           & \{q\_proj, k\_proj, lm\_head\} \\
\bottomrule
\end{tabular}
}% resizebox
\end{table}

\section{Out-of-Distribution Dataset}
\label{ood}
Following \citep{yangbayesian}, we use the Computer Science (CS), Engineering (Eng), Law, and Health subsets of the MMLU dataset as out-of-distribution data to evaluate the robustness of \ours{}. Table~\ref{table:mmlu} summarizes the MMLU subjects used for OoD evaluation.
\begin{table}[h]
\centering
\renewcommand{\arraystretch}{1.7}
\captionsetup{skip=8pt}
\caption{MMLU subjects and tasks.}
\resizebox{\linewidth}{!}{%
\begin{tabular}{P{2.7cm} P{6cm}}
\hline
\textbf{Subject} & \textbf{Tasks} \\
\hline
Computer Science (CS) & college computer science, computer security, \\
                      & high school computer science, machine learning \\
\hline
Engineering (Eng) & electrical engineering \\
\hline
Law & international law, jurisprudence, professional law \\
\hline
Health & anatomy, clinical knowledge, college medicine, human aging, \\
       & nutrition, professional medicine, virology \\
\hline
\end{tabular}
}% resizebox
\label{table:mmlu}
\end{table}
Following the HuggingFace description of the MMLU dataset \citep{hendrycksmeasuring}, we split college computer science, computer security, high school computer science, and machine learning into the CS category, electrical engineering into the Eng category, international law, jurisprudence, and professional law into the Law category, and anatomy, clinical knowledge, college medicine, human aging, nutrition, professional medicine, and virology into the Health category.

\section{Constrained Bayesian Optimization}
\label{appendix:bo}
In this section, we report the constrained Bayesian optimization results across all datasets.
Figure~\ref{fig:pareto_wgm_nll_ece_qq} presents the relationship between Negative Log-Likelihood and Expected Calibration Error with the optimal choice explicitly marked.
\label{constrained_bo}
\begin{table*}[!htb]
\centering
\setlength{\tabcolsep}{6pt}
\renewcommand{\arraystretch}{1.05}
\caption{Metrics before vs.\ after constrained Bayesian optimization. ``Before'' uses Table~\ref{tab:main} settings (\ours{}, $\text{N}=4$, early stop); ``After (BO)'' are results under constrained BO.}
\label{tab:bo_before_after_min}

\begin{adjustbox}{max width=\textwidth}
\begin{tabular}{l cc cc cc cc}
\toprule
 & \multicolumn{2}{c}{\textbf{ACC} $\uparrow$}
 & \multicolumn{2}{c}{\textbf{ECE} $\downarrow$}
 & \multicolumn{2}{c}{\textbf{NLL} $\downarrow$}
 & \multicolumn{2}{c}{\textbf{Hyperparams (BO)}} \\
\cmidrule(lr){2-3}\cmidrule(lr){4-5}\cmidrule(lr){6-7}\cmidrule(lr){8-9}
\textbf{Dataset}
& Before & After (BO)
& Before & After (BO)
& Before & After (BO)
& LR (BO) & WD (BO) \\
\midrule
WinoGrande-S   & 70.90 & \textbf{72.94} & 4.90 & \textbf{2.74} & 0.79 & \textbf{0.55} & $9.792\times 10^{-5}$ & $9.339\times 10^{-2}$ \\
ARC-C  & 68.90 & \textbf{69.60} & 9.20 & \textbf{6.10} & \textbf{0.77} & 0.86 & $4.955\times 10^{-5}$ & $2.056\times 10^{-1}$ \\
ARC-E  & 85.90 & \textbf{88.38} & 5.30 & \textbf{4.97} & \textbf{0.38} & 0.40 & $5.456\times 10^{-4}$ & $2.707\times 10^{-2}$ \\
WinoGrande-M   & 74.30 & \textbf{76.34} & \textbf{3.00} & 7.92 & 0.62 & \textbf{0.53} & $4.280\times 10^{-4}$ & $1.000\times 10^{-2}$ \\
OBQA   & 81.60 & \textbf{82.80} & \textbf{5.70} & 5.84 & \textbf{0.49} & 0.54 & $2.135\times 10^{-4}$ & $2.025\times 10^{-1}$ \\
BoolQ  & 86.10 & \textbf{86.41} & \textbf{2.10} & 3.10 & 0.29 & \textbf{0.28} & $5.421\times 10^{-4}$ & $3.582\times 10^{-1}$ \\
\bottomrule
\end{tabular}
\end{adjustbox}
\end{table*}
\begin{figure}[!h]
    \centering
    \includegraphics[width=1\linewidth]{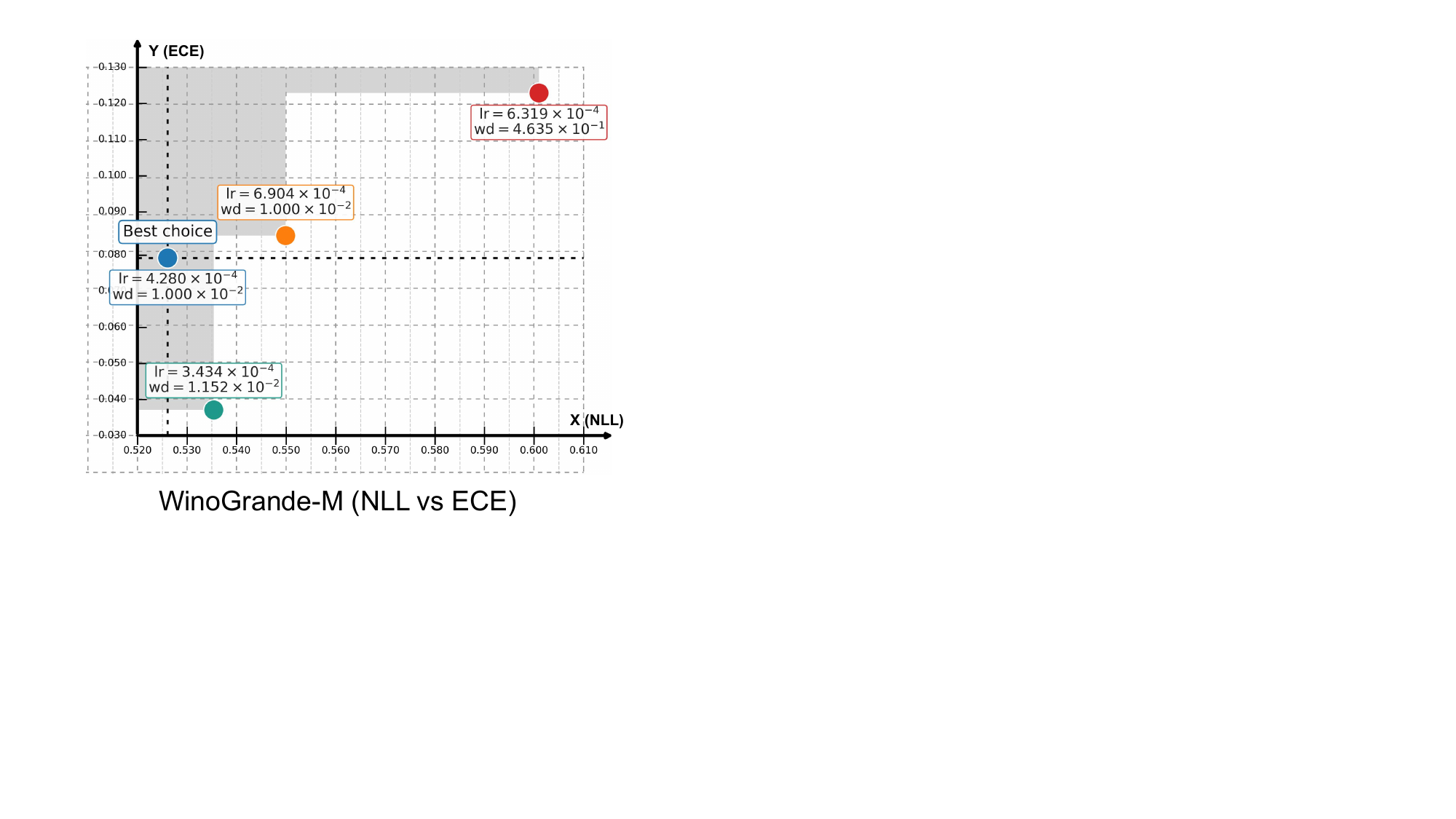}
    \caption{Pareto analysis on the WinoGrande-M dataset (NLL vs. ECE).
Each point denotes a hyperparameter pair $(\mathrm{lr}, \mathrm{wd})$. Gray regions show dominated solutions, and the cross marks the ``Best choice'' near the Pareto front.
}
\label{fig:pareto_wgm_nll_ece_qq}
\end{figure}

We additionally present the ARC-Easy Pareto charts in Figure~\ref{fig:pareto_arce_nll_ece} (left: ACC vs.\ NLL; right: ACC vs.\ ECE).

\begin{figure*}[!h]
    \centering
    \includegraphics[width=1\linewidth]{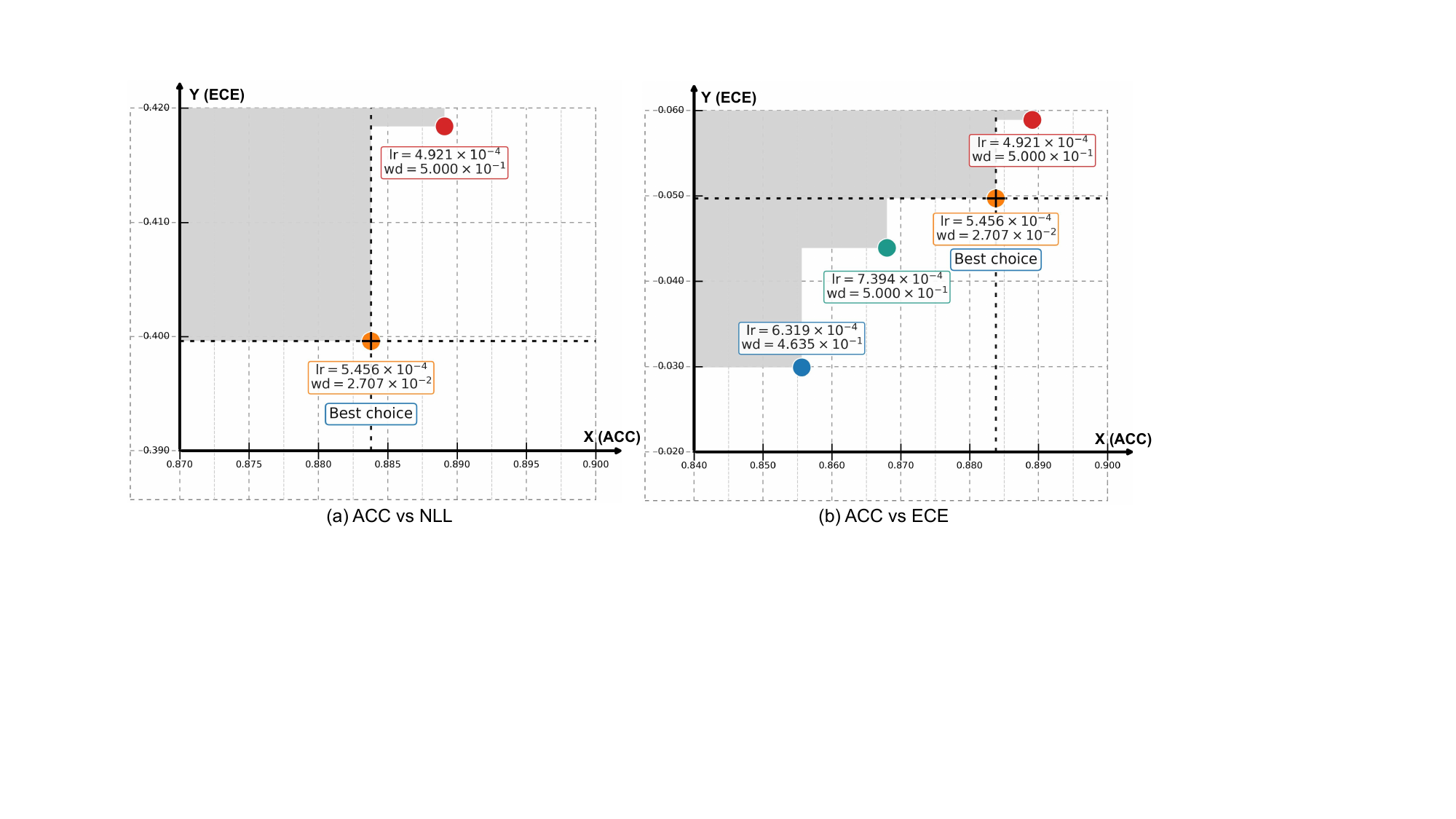}
    \caption{Pareto analysis on the ARC-Easy dataset (ACC vs. NLL and ACC vs. ECE).
Each point denotes a hyperparameter pair $(\mathrm{lr}, \mathrm{wd})$. Gray regions show dominated solutions, and the cross marks the ``Best choice'' near the Pareto front.
}
\label{fig:pareto_arce_nll_ece}
\end{figure*}

Table~\ref{tab:pareto_all_in_one} lists non-dominated candidates (w.r.t.\ ACC$\uparrow$, NLL$\downarrow$, ECE$\downarrow$) returned by constrained BO for all datasets.

% Full Pareto table
\begin{table*}[t]
\centering

\renewcommand{\arraystretch}{1.4}
\caption{Full Pareto candidates after constrained BO (all datasets combined).}
\label{tab:pareto_all_in_one}
\begin{adjustbox}{max width=\textwidth}
\begin{tabular}{l r c c c c c}
\toprule
\textbf{Dataset} & \textbf{Cand.} & \textbf{Acc} & \textbf{NLL} & \textbf{ECE (\%)} & \textbf{LR} & \textbf{Weight Decay} \\
\midrule
% ----- WinoGrande-M (filled) -----
WinoGrande-M & 1 & 0.7753 & 0.6010 & 12.29 & $6.319\times 10^{-4}$ & $4.635\times 10^{-1}$ \\
WinoGrande-M & 2 & 0.7682 & 0.5499 & 8.34  & $6.904\times 10^{-4}$ & $1.000\times 10^{-2}$ \\
WinoGrande-M & 3 & 0.7634 & 0.5260 & 7.92  & $4.280\times 10^{-4}$ & $1.000\times 10^{-2}$ \\
WinoGrande-M & 4 & 0.7342 & 0.5354 & 3.71  & $3.434\times 10^{-4}$ & $1.152\times 10^{-2}$ \\
% ----- WinoGrande-S (filled) -----
WinoGrande-S & 1 & 0.7342 & 0.5870 & 9.90  & $7.832\times 10^{-5}$ & $8.091\times 10^{-2}$ \\
WinoGrande-S & 2 & 0.7294 & 0.5474 & 2.74  & $9.792\times 10^{-5}$ & $9.339\times 10^{-2}$ \\
WinoGrande-S & 3 & 0.7033 & 0.5740 & 2.42  & $8.810\times 10^{-5}$ & $7.783\times 10^{-2}$  \\
% ----- ARC-C (filled) -----
ARC-C        & 1 & 0.6959 & 0.8616 & 6.10  & $4.955\times 10^{-5}$ & $2.056\times 10^{-1}$ \\
ARC-C        & 2 & 0.6858 & 0.8556 & 9.16  & $9.581\times 10^{-5}$ & $5.000\times 10^{-1}$ \\
ARC-C        & 3 & 0.6014 & 1.0015 & 4.41  & $6.081\times 10^{-5}$ & $3.141\times 10^{-1}$  \\
% ----- ARC-E (filled) -----
ARC-E        & 1 & 0.8891 & 0.4184 & 5.89  & $4.921\times 10^{-4}$ & $5.000\times 10^{-1}$ \\
ARC-E        & 2 & 0.8838 & 0.3996 & 4.97  & $5.456\times 10^{-4}$ & $2.707\times 10^{-2}$ \\
ARC-E        & 3 & 0.8680 & 0.4495 & 4.39  & $7.394\times 10^{-4}$ & $5.000\times 10^{-1}$ \\
ARC-E        & 4 & 0.8556 & 0.4797 & 2.99  & $6.319\times 10^{-4}$ & $4.635\times 10^{-1}$ \\
% ----- OBQA (placeholders) -----
OBQA         & 1 & 0.8380 & 0.5499 & 7.18  & $1.180\times 10^{-3}$ & $4.635\times 10^{-1}$ \\
OBQA         & 2 & 0.8280 & 0.5411 & 5.84  & $2.135\times 10^{-4}$ & $2.025\times 10^{-1}$  \\

% ----- BoolQ (filled) -----
BoolQ        & 1 & 0.8641 & 0.2841 & 3.10  & $5.421\times 10^{-4}$ & $3.582\times 10^{-1}$ \\
BoolQ        & 2 & 0.8572 & 0.3025 & 2.54  & $4.987\times 10^{-4}$ & $2.041\times 10^{-1}$ \\
BoolQ        & 3 & 0.8490 & 0.3152 & 1.86  & $6.103\times 10^{-4}$ & $5.000\times 10^{-1}$ \\

\bottomrule
\end{tabular}
\end{adjustbox}
\end{table*}
We select the final operating point by proximity to the empirical Pareto front (ties broken by lower NLL).

We tune learning rate $\eta$ and weight decay $\lambda_{\mathrm{wd}}$ on a bounded domain $\mathcal{X}\subset\mathbb{R}^2$:
\begin{equation}
\begin{split}
\min_{\mathbf{x}\in\mathcal{X}}\;
\mathbf{f}(\mathbf{x})=\big(f_1(\mathbf{x}),f_2(\mathbf{x}),f_3(\mathbf{x})\big)
=\\ \big(\mathrm{ECE},\ \mathrm{NLL},\ -\mathrm{ACC}\big)
\end{split}
\label{eq:obj2}
\end{equation}
with inequality constraints
\begin{equation}
c_j(\mathbf{x})\le 0,\quad j=1,\dots,J.
\label{eq:constr}
\end{equation}

\begin{equation}
\begin{split}
y_{m}(\mathbf{x}) = f_m(\mathbf{x})+\varepsilon_m,\quad
\varepsilon_m\sim\mathcal{N}(0,\sigma_m^2),
\\
\tilde{y}_{j}(\mathbf{x})=c_j(\mathbf{x})+\tilde{\varepsilon}_j,\ 
\tilde{\varepsilon}_j\sim\mathcal{N}(0,\tilde{\sigma}_j^2)
\end{split}
\label{eq:noise}
\end{equation}

For each objective/constraint:
\begin{equation}
f_m\sim\mathcal{GP}\big(\mu_m,k_m\big),\qquad
c_j\sim\mathcal{GP}\big(\mu^{(c)}_j,k^{(c)}_j\big)
\label{eq:gp-prior}
\end{equation}
Given data $\mathcal{D}$ and a candidate batch $X=\{\mathbf{x}^{(q)}\}_{q=1}^Q$,
\begin{align}
\mathbf{f}_m(X)\mid\mathcal{D}&\sim
\mathcal{N}\!\big(\boldsymbol{\mu}_{m\mid n}(X),\,\mathbf{\Sigma}_{m\mid n}(X)\big),
\label{eq:gp-post}\\
\begin{split}
\boldsymbol{\mu}_{m\mid n}(X)&=\mu_m(X)+K_m(X,X_{1:n}) \\
& \times \big(K_m+\sigma_m^2 I\big)^{-1}\!\big(\mathbf{y}_m-\mu_m(X_{1:n})\big)
\end{split}
\label{eq:gp-mean}\\
\begin{split}
\mathbf{\Sigma}_{m\mid n}(X)&=K_m(X,X)-K_m(X,X_{1:n}) \\
&\;\times \big(K_m+\sigma_m^2 I\big)^{-1}K_m(X_{1:n},X)
\end{split}
\label{eq:gp-cov}
\end{align}
and analogously for constraints.

With independent constraints,
\begin{equation}
\mathrm{PoF}(X)\approx
\prod_{q=1}^Q\prod_{j=1}^J
\Phi\!\left(\frac{-\mu^{(c)}_{j\mid n}(\mathbf{x}^{(q)})}
{\sqrt{\Sigma^{(c)}_{j\mid n}(\mathbf{x}^{(q)},\mathbf{x}^{(q)})}}\right)
\label{eq:pof}
\end{equation}
where $\Phi$ is the standard normal CDF.

\paragraph{q-NEHVI acquisition (constrained).}
Let $\mathbf{r}$ be the reference point and $\mathrm{HV}(\cdot;\mathbf{r})$ the dominated hypervolume. Define
\begin{equation}
\begin{split}
\alpha_{\mathrm{q\text{-}NEHVI}}(X)
&=\mathbb{E}\Big[
\big(\mathrm{HV}(\tilde{\mathcal{P}}\cup \mathbf{F}(X);\mathbf{r})
-\mathrm{HV}(\tilde{\mathcal{P}};\mathbf{r})\big)_{+},
\\
&\mathbb{I}\{\mathbf{C}(X)\le \mathbf{0}\}
\ \Bigm|\ \mathcal{D}\Big]
\end{split}
\label{eq:qnehvi}
\end{equation}
where $\tilde{\mathcal{P}}$ is a sample of the posterior Pareto set (feasible and non-dominated), $\mathbf{F}(X)$ stacks the objective draws, and $\mathbf{C}(X)$ the constraint draws.

With Cholesky factors $\mathbf{L}_{m\mid n}(X)$ of \eqref{eq:gp-cov},
\begin{equation}
    \begin{split}
\mathbf{F}_m^{(t)}(X)&=\boldsymbol{\mu}_{m\mid n}(X)+\mathbf{L}_{m\mid n}(X)\mathbf{z}_m^{(t)},\\
\mathbf{z}_m^{(t)} & \sim\mathcal{N}(\mathbf{0},I)
    \end{split}
\label{eq:reparam}
\end{equation}
and similarly $\mathbf{C}^{(t)}(X)$. Sampling $\tilde{\mathcal{P}}^{(t)}$ from the posterior of historical designs, the MC estimator is
\begin{equation}
\begin{split}
\widehat{\alpha}_{\mathrm{q\text{-}NEHVI}}(X)=
\frac{1}{T}\sum_{t=1}^{T}
\Big(\mathrm{HV}(\tilde{\mathcal{P}}^{(t)}\!\cup \mathbf{F}^{(t)}(X);\mathbf{r}) \\
-\mathrm{HV}(\tilde{\mathcal{P}}^{(t)};\mathbf{r})\Big)_{+}\,
\mathbb{I}\{\mathbf{C}^{(t)}(X)\le \mathbf{0}\}
\end{split}
\label{eq:mc}
\end{equation}
A PoF-weighted variant:
\begin{equation}
\begin{split}
\widehat{\alpha}_{\mathrm{q\text{-}NEHVI}}^{\mathrm{(PoF)}}(X)
&= \Bigg[ \frac{1}{T}\sum_{t=1}^{T}
\Big( \mathrm{HV}\big(\tilde{\mathcal{P}}^{(t)}\!\cup \mathbf{F}^{(t)}(X);\mathbf{r}\big) \\
&\quad -\mathrm{HV}\big(\tilde{\mathcal{P}}^{(t)};\mathbf{r}\big)\Big)_{+} \Bigg] \\
&\quad \times \mathrm{PoF}(X)
\end{split}
\label{eq:mc-pof}
\end{equation}
The reference point is then obtained as follows:
\begin{equation}
\mathbf{r}=(r_1,r_2,r_3),\qquad
r_m<\min_{\text{historical feasible}} f_m
\label{eq:ref}
\end{equation}

\begin{table}[H]
\centering
\renewcommand{\arraystretch}{1.7}
\setlength{\tabcolsep}{6pt}
\caption{Trainable parameters corresponding to different ranks.}
\label{tab:trainable_params_llm}
\begin{adjustbox}{max width=\linewidth}
\begin{tabular}{c|cccccc}
\hline
\textbf{Rank} & 4 & 9 & 16 & 32 & 64 & 128 \\
\hline
\textbf{Trainable Parameters} & 3.4M & 4.9M & 7.0M & 12M & 22M & 43M \\
\hline
\end{tabular}
\end{adjustbox}
\end{table}

\section{Additional Experiments on Qwen3-14B}
\label{app:qwen3-14b}

To further verify that the benefits of \ours{} extend to more recent backbones, we conduct additional experiments on \textbf{Qwen3-14B}, a newer-generation dense 14B model that is distinct from the Qwen2.5-14B-Instruct backbone used for the MATH experiments in Section~\ref{sec:scaling}; the two are different models evaluated on separate tasks. These Qwen3-14B experiments are reported only in this appendix and are not part of the main-paper results, which foreground Qwen2.5-14B-Instruct and Qwen3-30B-A3B-Instruct-2507. We evaluate on the multiple-choice classification benchmarks ARC-Challenge (ARC-C) and OpenBookQA (OBQA), as well as on the mathematical reasoning benchmark AIME~2024. The number of Monte Carlo samples is set to $\text{N}=4$. The classification results are summarized in Table~\ref{tab:w1-a-qwen3-14b}, and the AIME~2024 results in Table~\ref{tab:w1-b-qwen3-14b-aime}.

\begin{table*}[!htb]
\centering
\small
\renewcommand{\arraystretch}{1.2}
\caption{Qwen3-14B on classification benchmarks $\text{N}=4$. ACC ($\uparrow$), ECE ($\downarrow$), NLL ($\downarrow$). Best results in each column are in \textbf{bold}.}
\label{tab:w1-a-qwen3-14b}
\begin{adjustbox}{max width=\textwidth}
\begin{tabular}{l ccc ccc}
\toprule
 & \multicolumn{3}{c}{\textbf{ARC-C}} & \multicolumn{3}{c}{\textbf{OBQA}} \\
\cmidrule(lr){2-4}\cmidrule(lr){5-7}
\textbf{Method} & ACC $\uparrow$ & ECE $\downarrow$ & NLL $\downarrow$
                & ACC $\uparrow$ & ECE $\downarrow$ & NLL $\downarrow$ \\
\midrule
LoRA           & 90.70 & 4.30 & 0.4259 & 88.80 & 5.04 & 0.3942 \\
LoRA+          & 90.71 & 4.12 & 0.4494 & 89.20 & 4.43 & 0.3855 \\
AdaLoRA        & 90.61 & 4.48 & 0.5962 & 89.80 & 4.77 & 0.5087 \\
\addlinespace
\rowcolor{ourshl}
\ours{}        & \textbf{90.78} & \textbf{3.30} & \textbf{0.2681}
               & \textbf{90.40} & \textbf{3.40} & \textbf{0.3137} \\
\bottomrule
\end{tabular}
\end{adjustbox}
\end{table*}

On ARC-C and OBQA, \ours{} matches or exceeds all LoRA-style baselines in accuracy while substantially improving calibration: ECE drops from $4.30$ (LoRA) to $3.30$ on ARC-C and from $5.04$ to $3.40$ on OBQA, and NLL improves by a similar margin. These results are consistent with the Llama-2-7B trends reported in the main paper and show that the structured Kronecker-factored posterior continues to deliver well-calibrated predictions when scaled to a more recent 14B-parameter backbone. A matched-regularization comparison against LoRA with MC-Dropout~\citep{gal2016dropout}---which injects stochasticity without any Bayesian formulation---is provided on Llama-2-7B in Table~\ref{tab:main}, where \ours{} achieves consistently lower ECE and NLL across all six benchmarks; this isolates the calibration gain to the structured posterior rather than to noise injection alone.

\begin{table}[!htb]
\centering
\small
\renewcommand{\arraystretch}{1.2}
\caption{Qwen3-14B on AIME~2024 ($\text{N}=4$; train: AIME 1983--2023, test: 30 problems). We report accuracy (ACC, \%) together with ECE and NLL of the chain-of-thought predictive distribution. Best results are in \textbf{bold}.}
\label{tab:w1-b-qwen3-14b-aime}
\begin{tabular}{l ccc}
\toprule
\textbf{Method} & ACC $\uparrow$ & ECE $\downarrow$ & NLL $\downarrow$ \\
\midrule
Qwen3-14B base (4-shot)   & 76.7  & 9.07 & 0.2856 \\
Standard LoRA             & \textbf{80.0}  & 11.26 & 0.2982 \\
\rowcolor{ourshl}
\ours{} (epoch 5)         & \textbf{80.0}  & \textbf{5.26} & \textbf{0.1144} \\
\bottomrule
\end{tabular}
\end{table}

Overall, these additional Qwen3-14B experiments corroborate our main findings: \ours{} provides consistent gains in calibration and likelihood across both classification and mathematical reasoning tasks, and the improvements are not specific to a single backbone among those used in the main experiments (Llama-2-7B, Qwen2.5-14B-Instruct, and Qwen3-30B-A3B).

\section{Comparison with Recent Bayesian LoRA Baselines}
\label{app:recent-baselines}

To position \ours{} against the most recent uncertainty-aware LoRA methods, we compare against two concurrent works: \textbf{C-LoRA}~\citep{rahmati2025c}, which models uncertainty through contextual low-rank adaptation, and \textbf{TFB}~\citep{shi2025training}, a post-hoc, training-free Bayesianization procedure applied on top of an existing LoRA adapter. These approaches were published after the initial ICML submission deadline; we include them here for completeness.

\paragraph{Comparison with C-LoRA on Llama-2-7B.}
C-LoRA uses the same backbone as our main experiments, allowing a direct comparison on all six commonsense reasoning benchmarks. Table~\ref{tab:cmp-clora} reports ACC, ECE, and NLL for both methods.

\begin{table*}[!htb]
\centering
\small
\renewcommand{\arraystretch}{1.2}
\setlength{\tabcolsep}{4pt}
\caption{Comparison with C-LoRA~\citep{rahmati2025c} on Llama-2-7B across six commonsense reasoning benchmarks. \ours{} uses $\text{N}=4$ MC samples; C-LoRA results are reported with $M=10$. Best results in each row are in \textbf{bold}.}
\label{tab:cmp-clora}
\begin{adjustbox}{max width=\textwidth}
\begin{tabular}{ll cccccc}
\toprule
\textbf{Metric} & \textbf{Method} & \textbf{WinoGrande-S} & \textbf{ARC-C} & \textbf{ARC-E} & \textbf{WinoGrande-M} & \textbf{OBQA} & \textbf{BoolQ} \\
\midrule
\multirow{2}{*}{ACC $\uparrow$}
 & C-LoRA ($M=10$)              & 66.21 & 67.79 & 84.38 & 70.48 & 78.26 & 84.64 \\
 & {\ours{}} ($\text{N}=4$) & \textbf{70.90} & \textbf{68.90} & \textbf{85.90} & \textbf{74.30} & \textbf{81.60} & \textbf{86.10} \\
\midrule
\multirow{2}{*}{ECE $\downarrow$}
 & C-LoRA ($M=10$)              & 6.86 & \textbf{8.83} & \textbf{4.27} & 3.71 & \textbf{4.00} & \textbf{1.62} \\
 & \ours{} ($\text{N}=4$) & \textbf{4.90} & 9.20 & 5.30 & \textbf{3.00} & 5.70 & 2.10 \\
\midrule
\multirow{2}{*}{NLL $\downarrow$}
 & C-LoRA ($M=10$)              & \textbf{0.63} & 0.88 & 0.48 & \textbf{0.57} & 0.59 & 0.35 \\
 & \ours{} ($\text{N}=4$) & 0.79 & \textbf{0.77} & \textbf{0.38} & 0.62 & \textbf{0.49} & \textbf{0.29} \\
\bottomrule
\end{tabular}
\end{adjustbox}
\end{table*}

\ours{} outperforms C-LoRA on accuracy across all 6 benchmarks (with a gain of $+1.1$ on ARC-C and a maximum gain of $+4.69$ on WinoGrande-S), on ECE in 2 of 6 benchmarks, and on NLL in 4 of 6 benchmarks.

\paragraph{Comparison with TFB on Llama-3.1-8B.}
TFB uses Llama-3.1-8B as its backbone in the original paper, so we additionally ran \ours{} on the same backbone to enable a matched comparison. TFB is a post-hoc procedure applied on top of a BLoB-Mean checkpoint; we report the numbers directly from Table~1 of \citep{shi2025training}. Results are in Table~\ref{tab:cmp-tfb}.

\begin{table}[!htb]
\centering
\small
\renewcommand{\arraystretch}{1.2}
\caption{Comparison with TFB~\citep{shi2025training} on Llama-3.1-8B. TFB numbers are taken from Table~1 of \citep{shi2025training}. Best results in each row are in \textbf{bold}.}
\label{tab:cmp-tfb}
\begin{tabular}{ll cc}
\toprule
\textbf{Metric} & \textbf{Method} & \textbf{ARC-C} & \textbf{ARC-E} \\
\midrule
\multirow{2}{*}{ACC $\uparrow$}
 & BLoB-Mean + TFB           & \textbf{83.33 $\pm$ 0.19} & \textbf{91.76 $\pm$ 0.48} \\
 & \ours{}          & 83.01 & 91.12 \\
\midrule
\multirow{2}{*}{ECE $\downarrow$}
 & BLoB-Mean + TFB           & 6.48 $\pm$ 0.36 & 2.44 $\pm$ 0.50 \\
 & \ours{}          & \textbf{3.23} & \textbf{1.83} \\
\midrule
\multirow{2}{*}{NLL $\downarrow$}
 & BLoB-Mean + TFB           & 0.530 $\pm$ 0.04 & 0.230 $\pm$ 0.02 \\
 & \ours{}          & \textbf{0.503} & \textbf{0.217} \\
\bottomrule
\end{tabular}
\end{table}

On Llama-3.1-8B, the two methods reach very similar accuracy (within $0.7$ points on both datasets), with TFB marginally higher; however, \ours{} achieves substantially better calibration: ECE drops by roughly $50\%$ on ARC-C ($6.48 \to 3.23$) and $25\%$ on ARC-E ($2.44 \to 1.83$), and NLL is lower on both datasets. This is consistent with the conceptual difference between the two approaches: TFB applies a training-free Bayesian transformation after standard fine-tuning, whereas \ours{} optimizes the ELBO end-to-end, so calibration shapes the posterior throughout training rather than being added as a post-hoc correction.

\paragraph{Summary.}
We view C-LoRA, TFB, and \ours{} as complementary contributions that approach uncertainty-aware LoRA from different angles---contextual conditioning, training-free post-hoc Bayesianization, and end-to-end variational inference with a GP-inspired posterior, respectively. Collectively, the comparisons in Tables~\ref{tab:cmp-clora}--\ref{tab:cmp-tfb} indicate that \ours{} delivers competitive accuracy with consistently stronger calibration on most of the benchmarks we evaluated.

\section{Visual Comparison with Bayesian and Post-hoc PEFT Baselines}
\label{app:scatter-comparison}

To make the positioning of \ours{} relative to existing Bayesian and post-hoc calibration methods more intuitive, Figures~\ref{fig:acc-ece-avg} and~\ref{fig:acc-ece-wgs} plot Accuracy against Expected Calibration Error for all baselines on Llama-2-7B. In both plots the ECE axis is reversed (low ECE on the right), so the upper-right corner corresponds to the ideal regime---high accuracy with low calibration error---and a method that dominates another lies strictly closer to that corner.

Figure~\ref{fig:acc-ece-avg} summarizes the average performance across the six commonsense reasoning benchmarks (WinoGrande-S, WinoGrande-M, ARC-C, ARC-E, OBQA, BoolQ). \ours{} (red star) lies on the upper-right Pareto frontier, jointly achieving the highest accuracy and the lowest ECE. Training-time Bayesian baselines---BLoB~\citep{wang2024blob}, MC Dropout~\citep{gal2016dropout}, checkpoint ensembles, and BBB \citep{blundell2015BBB}---trade off either accuracy or calibration; post-hoc calibration methods (Temperature Scaling~\citep{guo2017calibration}, Laplace Approximation~\citep{daxberger2021laplace,yangbayesian}) reduce ECE but do not improve accuracy and remain below \ours{} on both axes.

\begin{figure}[!htb]
    \centering
    \includegraphics[width=1.0\linewidth]{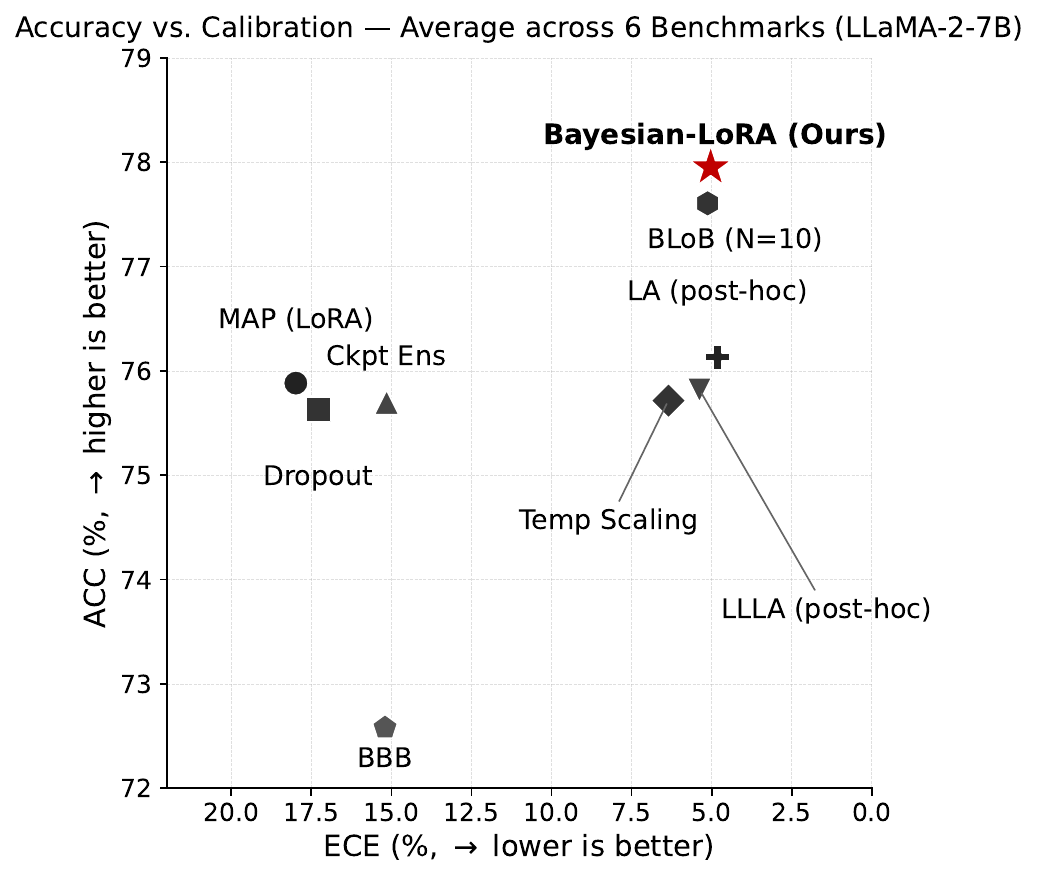}
    \caption{Accuracy (ACC, $\uparrow$) vs.\ Expected Calibration Error (ECE, $\downarrow$) on Llama-2-7B, averaged across six commonsense reasoning benchmarks. Each marker is a method. \ours{} (red star) occupies the upper-right Pareto-optimal corner, outperforming both training-time Bayesian baselines (BLoB, Dropout, checkpoint ensembles, BBB) and post-hoc calibration baselines (Temperature Scaling, Laplace, Last-Layer Laplace). The ECE axis is reversed (low ECE on the right), so the ideal region, with high ACC and low ECE, is the upper-right corner.}
    \label{fig:acc-ece-avg}
\end{figure}

Figure~\ref{fig:acc-ece-wgs} shows the same comparison restricted to the WinoGrande-S benchmark, which is the smallest and most prone to overconfidence after fine-tuning. \ours{} again sits in the upper-right region, while BBB collapses to near-chance accuracy (consistent with the high variance reported in the main paper), and post-hoc methods reduce ECE only at the cost of lower accuracy relative to \ours{}.

\begin{figure}[!htb]
    \centering
    \includegraphics[width=1.0\linewidth]{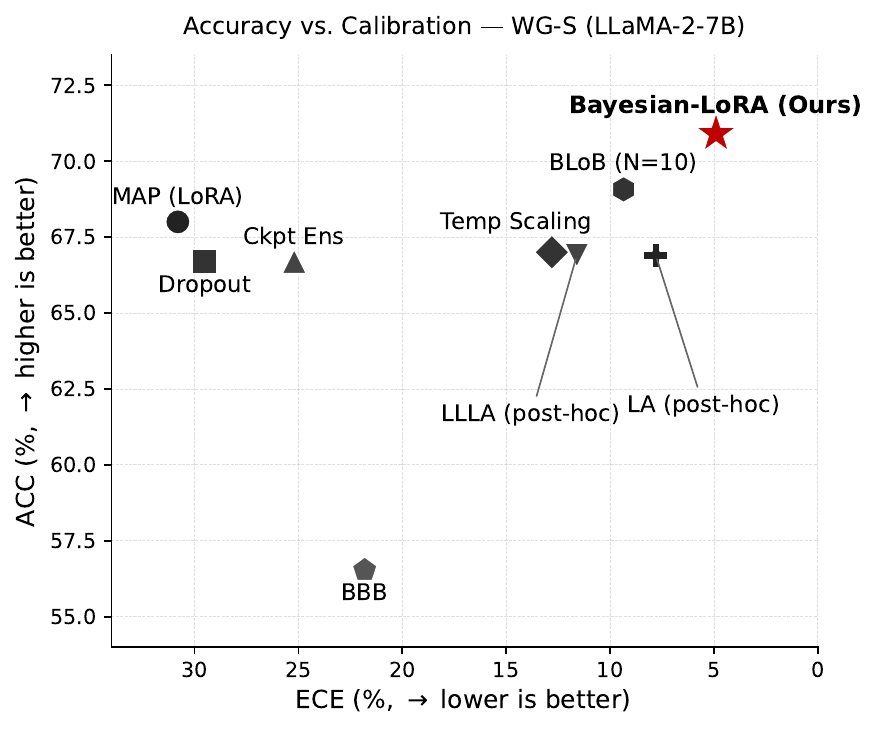}
    \caption{Accuracy (ACC, $\uparrow$) vs.\ Expected Calibration Error (ECE, $\downarrow$) on the WinoGrande-S (WG-S) benchmark with Llama-2-7B. \ours{} (red star) lies on the upper-right Pareto frontier, achieving the highest accuracy while maintaining competitive calibration. BBB collapses to near-chance accuracy on this small binary task, and post-hoc methods (Temperature Scaling, Laplace, Last-Layer Laplace) reduce ECE only at the cost of lower accuracy.}
    \label{fig:acc-ece-wgs}
\end{figure}

Together, these two figures provide a visual summary of the quantitative comparisons in Table~\ref{tab:main} and complement the conceptual contrast between three families of methods: post-hoc methods operate on a fixed point estimate after training, BBB-style approaches place a mean-field Gaussian over every LoRA parameter, while \ours{} places a structured GP-inspired posterior over a low-dimensional inducing matrix and optimizes the ELBO end-to-end. The resulting joint position in (ACC, ECE) space reflects this design choice.

\section{Stability of \ours{}: Seeds, Adapter Placement, and Inducing-Matrix Shape}
\label{app:stability}

To show that the reported gains of \ours{} are not driven by a particular random seed, a particular choice of adapter modules, or a particular shape of the inducing matrix, we conduct three robustness studies on ARC-Easy. All runs in this section use $\text{N}=4$ Monte Carlo samples on Qwen3-14B.

\paragraph{Seed stability.} Table~\ref{tab:q4a-seeds} reports ACC, ECE, and NLL across five random seeds $\{42, 123, 456, 789, 2024\}$ with all other settings fixed to the default configuration of Table~\ref{tab:main}. The standard deviation of accuracy is $0.01$ points, of ECE~$0.06$, and of NLL $0.0001$, indicating that the calibration gains over LoRA-style baselines are statistically meaningful and do not depend on seed choice.

\begin{table}[!htb]
\centering
\small
\renewcommand{\arraystretch}{1.2}
\caption{Seed stability of \ours{} on ARC-Easy (Qwen3-14B, $\text{N}=4$). The last row reports mean $\pm$ standard deviation over five seeds.}
\label{tab:q4a-seeds}
\resizebox{\linewidth}{!}{%
\begin{tabular}{c ccc}
\toprule
\textbf{Seed} & ACC $\uparrow$ & ECE $\downarrow$ & NLL $\downarrow$ \\
\midrule
42    & 94.59 & 0.76 & 0.1347 \\
123   & 94.57 & 0.88 & 0.1345 \\
456   & 94.57 & 0.88 & 0.1348 \\
789   & 94.57 & 0.81 & 0.1349 \\
2024  & 94.57 & 0.76 & 0.1347 \\
\midrule
\textbf{mean $\pm$ std} & \textbf{94.57 $\pm$ 0.01} & \textbf{0.82 $\pm$ 0.06} & \textbf{0.1347 $\pm$ 0.0001} \\
\bottomrule
\end{tabular}}
\end{table}

\paragraph{Adapter placement.} Table~\ref{tab:q4b-placement} varies which modules receive \ours{} adapters, ranging from the default attention slice (q, k, lm\_head) to LoRA-style (q, v, lm\_head), full attention (q, k, v, o, lm\_head), MLP-only (gate, up, down, lm\_head), and all linear layers. Accuracy stays at $\sim 94.5\%$ and ECE in $[0.71, 0.93]$ across all five placements, showing that calibration gains are not specific to the default attention configuration used in the main experiments.

\begin{table}[!htb]
\centering
\small
\renewcommand{\arraystretch}{1.2}
\caption{Adapter placement ablation on ARC-Easy (Qwen3-14B, seed $=42$, $\text{N}=4$). Rows differ only in the set of target modules to which \ours{} adapters are attached.}
\label{tab:q4b-placement}
\resizebox{\linewidth}{!}{%
\begin{tabular}{l l ccc}
\toprule
\textbf{Placement} & \textbf{Target modules} & ACC $\uparrow$ & ECE $\downarrow$ & NLL $\downarrow$ \\
\midrule
A (default)   & q, k, lm\_head              & 94.59 & 0.76 & 0.1347 \\
B (LoRA-style)& q, v, lm\_head              & 94.57 & 0.74 & 0.1347 \\
C (Attention) & q, k, v, o, lm\_head        & 94.57 & 0.71 & 0.1348 \\
D (MLP)       & gate, up, down, lm\_head    & 94.49 & 0.93 & 0.1348 \\
E (All)       & all linear + lm\_head       & 94.53 & 0.74 & 0.1348 \\
\bottomrule
\end{tabular}}
\end{table}

\paragraph{Inducing-matrix shape ($r \neq c$).} Throughout the main paper we set the inducing dimensions $r = c$ to match the LoRA rank for a fair parameter-count comparison. To confirm that this is a convenience rather than a requirement, Table~\ref{tab:rne-c-ablation} sweeps $r$ and $c$ independently on Qwen3-14B. Performance is essentially flat across symmetric, mildly asymmetric, and strongly asymmetric configurations, with ACC differing by at most $0.06$ points and ECE by at most $0.11$.

\begin{table}[!htb]
\centering
\small
\renewcommand{\arraystretch}{1.2}
\caption{Ablation over the inducing-matrix shape $(r, c)$ on ARC-Easy (Qwen3-14B, seed $=42$, $\text{N}=4$). Performance is robust to the choice of $r$ versus $c$ and does not require $r = c$.}
\label{tab:rne-c-ablation}
\resizebox{\linewidth}{!}{%
\begin{tabular}{l cc ccc}
\toprule
\textbf{Configuration} & $r$ & $c$ & ACC $\uparrow$ & ECE $\downarrow$ & NLL $\downarrow$ \\
\midrule
$r = c$ (default)    & 9  & 9  & 94.59 & 0.76 & 0.1347 \\
$r < c$              & 4  & 9  & 94.57 & 0.80 & 0.1347 \\
$r > c$              & 9  & 4  & 94.57 & 0.72 & 0.1349 \\
$r < c$ (larger)     & 9  & 16 & 94.53 & 0.82 & 0.1349 \\
$r > c$ (larger)     & 16 & 9  & 94.53 & 0.83 & 0.1349 \\
\bottomrule
\end{tabular}}
\end{table}

Collectively, Tables~\ref{tab:q4a-seeds}--\ref{tab:rne-c-ablation} indicate that the gains reported in the main paper are stable across random initialization, adapter placement, and inducing-matrix shape, and are not artifacts of a single hyperparameter configuration.

\section{Training Hyperparameters on the MATH Dataset}
\label{app:math-hyperparams}

For completeness, Table~\ref{tab:math-hyperparams} lists the full training configuration used for the MATH experiments in Section~\ref{sec:scaling}, covering both deterministic LoRA and \ours{} fine-tuning. \ours{} uses a structured Bayesian low-rank update with inducing rank $r = c = 9$ (the MATH LoRA baseline uses rank $8$).

\begin{table*}[!htb]
\centering
\small
\renewcommand{\arraystretch}{1.15}
\caption{Training hyperparameters for deterministic LoRA and \ours{} fine-tuning on the MATH dataset.}
\label{tab:math-hyperparams}
\begin{tabular}{ll}
\toprule
\textbf{Category} & \textbf{Hyperparameter (value)} \\
\midrule
\multicolumn{2}{l}{\textit{Model \& tokenizer}} \\
Model name              & \texttt{Qwen/Qwen3-30B-A3B-Instruct-2507} \\
Max input length        & 1024 \\
BF16 precision          & True \\
FP16 precision          & False \\
Load in 8-bit           & False \\
Gradient checkpointing  & Optional (default: disabled) \\
Pad token               & EOS token \\
\midrule
\multicolumn{2}{l}{\textit{Training}} \\
Epochs                       & 1 \\
Batch size (per device)      & 1 \\
Gradient accumulation steps  & 1 \\
Learning rate                & $1\times 10^{-5}$ \\
Warmup steps                 & 100 \\
Optimizer                    & AdamW (PyTorch) \\
Max grad norm                & 1.0 \\
Logging steps                & 50 \\
Save strategy                & Per epoch \\
Save total limit             & 4 \\
\midrule
\multicolumn{2}{l}{\textit{Dataset \& prompt}} \\
Dataset                & \texttt{DigitalLearningGmbH/MATH-lighteval} \\
Split                  & train \\
Prompt type            & qwen25-math-cot \\
Apply chat template    & True \\
Max tokens per call    & 1534 \\
\midrule
\multicolumn{2}{l}{\textit{Deterministic LoRA}} \\
LoRA rank $r$       & 8 \\
LoRA $\alpha$       & 16 \\
LoRA dropout        & 0.05 \\
Target modules      & q\_proj, v\_proj \\
Bias                & none \\
Task type           & Causal LM \\
\midrule
\multicolumn{2}{l}{\textit{\ours{}}} \\
\ours{} layer        & \texttt{BayesianLoRALayer} \\
Bayesian rank ($r_{\text{bayes}}$) & 9 \\
Linear layer override      & \texttt{nn.Linear} $\rightarrow$ \texttt{BayesianLoRALayer} \\
Posterior treatment        & Variational / structured low-rank \\
Uncertainty sampling       & Enabled (Bayesian forward samples) \\
Merge for inference        & Supported via \texttt{merge\_and\_unload} \\
\midrule
\multicolumn{2}{l}{\textit{Evaluation}} \\
Temperature        & 0 \\
Top-p              & 1.0 \\
Number of samples       & 1 \\
vLLM eval          & Optional \\
Eval batch size    & 1 \\
\bottomrule
\end{tabular}
\end{table*}

\end{document}